\documentclass{article}





\usepackage[nonatbib,preprint]{neurips_2020}
\usepackage[numbers,sort&compress]{natbib}

\usepackage[utf8]{inputenc} 
\usepackage[T1]{fontenc}    
\usepackage[colorlinks, allcolors=black]{hyperref}       

\usepackage{alex}
\usepackage{mathtools}

\newcommand{\tbf}[1]{\textbf{#1}}
\DeclarePairedDelimiter{\abs}{\lvert}{\rvert}

\DeclarePairedDelimiter{\norm}{\lVert}{\rVert}

\newcommand{\indep}{\protect\mathpalette{\protect\independenT}{\perp}}
\def\independenT#1#2{\mathrel{\rlap{$#1#2$}\mkern2mu{#1#2}}}



\newcommand{\trm}[1]{\mathrm{#1}}

\providecommand\f[2]{\ensuremath \frac{#1}{#2}}

\providecommand\f[2]{\ensuremath \frac{#1}{#2}}

\newcommand \heading[1] {\noindent \tbf{#1}.}

\usepackage{bm,graphicx,placeins,float,relsize,wrapfig,listings,mathabx,hyperref,grffile,colortbl,dsfont,bbm, float,nicefrac,amsfonts,microtype,url,booktabs,subcaption}
\usepackage[shortlabels]{enumitem}
\usepackage[percent]{overpic}
\usepackage[usenames,dvipsnames,svgnames,table]{xcolor}
\usepackage[font={small}]{caption}

\usepackage{multibib} 
\newcites{si}{Additional References for the Appendix}

\newcommand{\nocontentsline}[3]{}
\let\oldaddcontentsline\addcontentsline
\newcommand{\tocless}[2]{%
  \let\addcontentsline\nocontentsline
  #1{#2}
  \let\addcontentsline\oldaddcontentsline}

\renewcommand{\cite}{\citep} 

\newcommand{\beginsupplement}{ 
        \setcounter{section}{0}
        \renewcommand{\thesection}{S\arabic{section}} %
         \renewcommand{\thesubsection}{\thesection.\arabic{subsection}}
        \setcounter{table}{0}
        \renewcommand{\thetable}{S\arabic{table}} %
        \setcounter{figure}{0}
        \renewcommand{\thefigure}{S\arabic{figure}} %
     }

\definecolor{LightGray}{gray}{0.9}

\newcolumntype{g}{>{\columncolor{LightGray}}c} 


\newcommand\papertitle{Fast, Accurate, and Simple Models for Tabular Data via Augmented Distillation}








\newcommand\transformer{FAST-DAD-Net}
\newcommand\method{FAST-DAD}


\setlength{\floatsep}{0.05in}
\setlength{\textfloatsep}{0.05in}
\setlength{\intextsep}{0.05in}
\setlength{\belowcaptionskip}{0.05in}
\setlength{\abovecaptionskip}{0.05in}
\setlength{\abovedisplayskip}{0.05in}
\setlength{\belowdisplayskip}{0.05in}

\title{\papertitle}

%


\author{
Rasool Fakoor\thanks{Equal contribution.} \\
Amazon Web Services \\
\texttt{fakoor@amazon.com} \\
\And 
Jonas Mueller\footnotemark[1] \\
Amazon Web Services \\
\texttt{jonasmue@amazon.com} \\[-1em]
  \AND
  Nick Erickson\\
  Amazon Web Services\\
  \texttt{neerick@amazon.com} \\
  \And
  Pratik Chaudhari \\
  University of Pennsylvania \\
  \texttt{pratikac@seas.upenn.edu}
  \And
  Alexander J. Smola \\
  Amazon Web Services \\
  \texttt{smola@amazon.com} \\
}

\begin{document}

\maketitle

\vspace*{-1em}
\begin{abstract}
  Automated machine learning (AutoML) can
  produce complex model ensembles by stacking, bagging,
  and boosting many individual models like trees,
  deep
  networks, and nearest neighbor estimators. While highly accurate,
  the resulting predictors are \emph{large, slow}, and \emph{opaque} as
  compared to their constituents.
  To improve the deployment of AutoML on tabular data, we propose FAST-DAD to distill arbitrarily-complex ensemble predictors into individual models like boosted trees, random forests, and deep networks.
  At the heart of our
  approach is a data augmentation strategy based on Gibbs sampling from a self-attention pseudolikelihood estimator.
  Across 30 datasets spanning regression and binary/multiclass
  classification tasks, FAST-DAD distillation
  produces significantly better individual models than one obtains through standard training on the original data.
 Our individual distilled models are over 10$\times$ faster \emph{and} more accurate than ensemble predictors produced by AutoML tools like H2O/AutoSklearn.
\end{abstract}

\section{Introduction}

\begin{wrapfigure}{r}{0.45\textwidth}
\vspace*{-7mm}
\centering
    \includegraphics[width=0.45\textwidth]{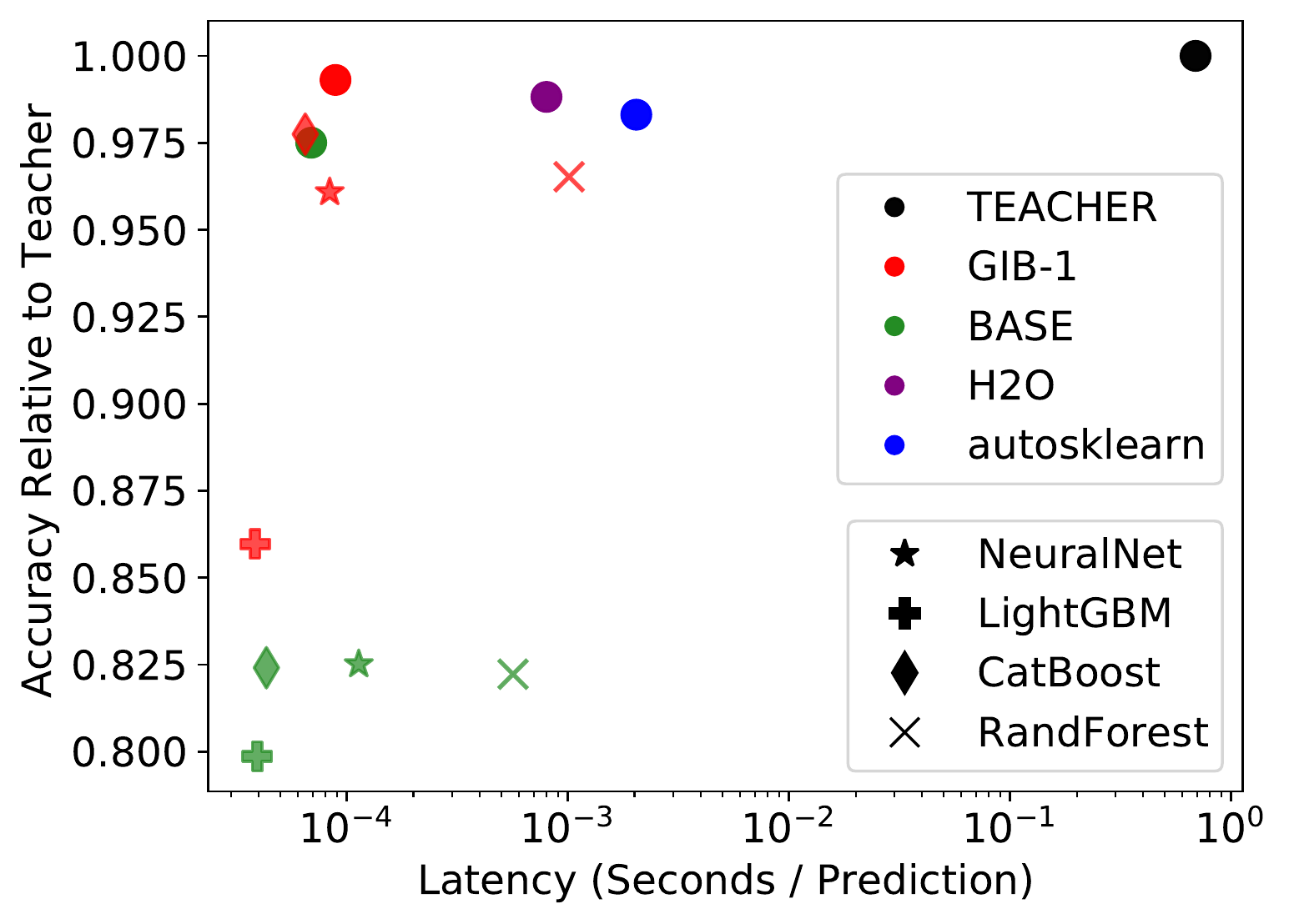}
    \caption{\tbf{Normalized test accuracy vs.\ speed 
      of individual models and AutoML ensembles, averaged over all 30 datasets}. TEACHER denotes the performance of AutoGluon; H2O and autosklearn represent the respective AutoML tools. GIB-1 indicates the results of \method{} after 1 round of Gibbs sampling. BASE denotes the student model fit on original data.  GIB-1/BASE dots represent the model \emph{Selected} (out of the 4 types) based on validation accuracy for each dataset.  
      }  
    \label{fig:improvements-hero}
     \vspace{-2mm}
\end{wrapfigure}
Modern AutoML tools provide good out-of-the-box accuracy on diverse datasets. This is often achieved through extensive model ensembling~\cite{erickson2020autogluon,feurer2019auto,cortes2017adanet}.
While the resultant predictors may generalize well, they can be
large, slow, opaque, and expensive to deploy. \cref{fig:improvements-hero} shows that the most accurate predictors can be 10,000 times slower than their constituent models.

Model distillation~\citep{bucilua2006compress,hinton2015distilling} offers a way to compress the knowledge learnt by these complex models into simpler predictors with reduced inference-time and memory-usage that are also less opaque and easier to modify and debug.
In distillation, we train a simpler model (the
\emph{student}) to output similar predictions as those of a more complex model
(the \emph{teacher}). Here we use AutoML to create the most accurate
possible teacher, typically an ensemble of many individual models via
stacking, bagging, boosting, and weighted
combinations~\citep{dietterich2000ensemble}.
Unfortunately,
distillation typically comes with a sharp drop in accuracy.
Our paper mitigates this drop via \method{}, a technique to produce \textbf{Fast}-and-accurate models via \textbf{D}istillation with \textbf{A}ugmented \textbf{D}ata. We apply \method{} to large
stack-ensemble predictors from AutoGluon \cite{erickson2020autogluon}
to produce individual models that are over 10,000$\times$ faster than AutoGluon and over 10$\times$ faster, yet still more accurate, than ensemble predictors produced by H2O-AutoML \cite{H2O} and AutoSklearn \cite{feurer2019auto}. 

\heading{Motivation}
A key issue in distillation is that the quality of the student is largely determined by the amount of available training data. 
While standard distillation confers smoothing benefits (where the teacher may provide higher-quality prediction targets to the student \cite{hinton2015distilling, tang2020understanding}), it incurs a student-teacher statistical approximation-error of similar magnitude as when training directly on original labeled dataset. 
By increasing the amount of data available for distillation, one can improve the student's approximation of the teacher and hence the student's accuracy on test data (assuming that the teacher achieves superior generalization error than fitting the student model directly to the original data). The extra data need not be labeled; one may use the teacher to
label it. This enables the use of density estimation techniques to
learn the distribution of the training data and draw samples of
unlabeled data. In fact, we need not even learn the full joint distribution but simply learn how to \emph{draw approximate samples} from it. We show that the statistical error in these new samples can be traded off against the reduction in variance from fitting the student to a larger dataset. Our resultant student models are almost as accurate as the teacher while being far more efficient/lightweight.

The contributions of this paper are as follows:
\begin{enumerate}[noitemsep,topsep=0pt,parsep=0pt,partopsep=0pt]
\item We present model-agnostic distillation that works across many
  types of teachers and students and different supervised learning
  problems (binary and multiclass classification, regression). This is
  in contrast to problem and architecture-specific distillation
  techniques
  \cite{bucilua2006compress,hinton2015distilling,vidal2020born,cho2019efficacy}.
\item We introduce a maximum pseudolikelihood model for
  tabular data that uses self-attention across covariates to
  simultaneously learn all of their conditional distributions.
\item We propose a corresponding Gibbs sampler that takes advantage of these 
  conditional estimates to efficiently augment the dataset used in distillation. Our \method{} approach avoids estimating the features' joint distribution, and enables
  control over sample-quality and diversity of the augmented dataset.
\item We report the first comprehensive distillation benchmark for
  tabular data which studies 5 distillation strategies with 4
  different types of student models applied over 30 datasets involving
  3 different types of prediction tasks.
\end{enumerate}

Although our techniques can be adapted to other
modalities, we focus on tabular data which has been under-explored in
distillation despite its ubiquity in practical
applications. Compared to typical data tables, vision and language datasets have far larger sample-sizes and with easily available data; data augmentation is thus not as critical for distillation as it is in the tabular setting.






\vspace*{-0.3em}
\section{Related Work}

While distillation and model compression are popular
in deep learning, existing work focuses primarily on vision, language
and speech applications. Unlike the tabular settings we consider here, this prior work studies situations where: (a) unlabeled data is plentiful; (b) there are many more training examples than in typical data tables; (c) both teacher and student are neural networks; (d) the task is multiclass classification
\cite{BaCaruana14,hinton2015distilling,urban2016deep,cho2019efficacy,mirzadeh2019improved,
  yang2020knowledge}.

For tabular data, \citet{breiman96} considered distilling
models into single decision trees, but this often unacceptably harms accuracy. Recently, \citet{vidal2020born} showed how to distill tree ensembles into a single tree without sacrificing accuracy, but their approach is restricted to tree student/teacher models. Like us,
\citet{bucilua2006compress} considered distillation of large ensembles
of heterogeneous models, pioneering the use of data augmentation in
this process. Their work only considered binary classification problems with a neural network student model; multiclass
classification is handled in a one-vs-all fashion which produces
less-efficient students that maintain a model for every
class. \citet{liu2018teacher} suggest generative-adversarial networks
can be used to produce better augmented data, but only
conduct a small-scale distillation study with random forests.


\vspace*{-0.2em}
\section{From Function Approximation to Distillation}
\label{s:motivation}


We first formalize distillation  
to quantify the role of the auxiliary data in this process.
Consider a dataset $(X_n, Y_n)$ where
$X_n = \cbr{x_i \in \mathcal{X} \subset \RR^d }_{i=1}^n $ are observations of some features sampled from distribution $p$, and 
$Y_n = \cbr{y_i \in \Ycal}_{i=1}^n$ are their labels 
sampled from distribution $p_{y|x}$. The teacher
$f: \Xcal \to \Ycal$ is some function learned e.g.\ via AutoML that achieves good generalization error:
$$R[f] := \Eb_{(x,y) \sim p} \sbr{\ell(f(x), y)}$$
where loss $\ell$ measures the error in individual predictions.
Our goal is to find a model $g$ from a restricted class of
functions $\Gcal \subset \Lcal^2(\Xcal)$ such that $R[g]$ is
smaller than the generalization error of another model from this class produced via empirical risk minimization.


%

\heading{Approximation}
Distillation seeks 
some student $g^*$ that is
``close'' to the teacher $f$. If
$\nbr{f-g^*}_{\infty} \leq \epsilon$ over $\Xcal$ and if the loss function
$\ell$ is Lipschitz continuous in $f(x)$, then $g^*$ will be nearly as accurate as the teacher ($R\sbr{g^*} \approx R\sbr{f}$). Finding such a $g^*$ may however be impossible. For instance, a Fourier
approximation of a step function will never converge uniformly but only
pointwise. This is known as the \emph{Gibbs phenomenon}
\cite[]{wilbraham1848certain}.  Fortunately, $\ell_\infty$-convergence is not required: we only
require convergence with regard to some distance function
$d(f(x), g(x))$ averaged over $p$. Here $d$ is determined by the
task-specific loss $\ell$. For instance, $\ell_2$-loss can be used for
regression and the KL divergence between class-probability estimates
from $f,g$ may be used in classification. Our goal during
distillation is thus to minimize
\begin{align}\label{eq:truedist}
    D(f,g,p) = \Eb_{x \sim p}\sbr{d(f(x), g(x))}.
\end{align}
This is traditionally handled by minimizing its empirical counterpart \citep{hinton2015distilling}:
\begin{align}
    \label{eq:truedist-emp}
    D_\trm{emp}(f,g,X_n) = \frac{1}{n} \sum_{i=1}^n d( f(x_i), g(x_i)).
\end{align}
\heading{Rates of Convergence}
Since it is only an empirical average, minimizing 
$D_\trm{emp}$ over $g \in \Gcal$ will give rise to an
approximation error that can bounded, e.g.\ by uniform convergence
bounds from statistical learning theory
\cite{vapnik1998statistical}  as 
$O(\sqrt{V/n})$. Here $V$ denotes the complexity
of the function class $\Gcal$ and $n$ is the number
of observations used for distillation. Note that we effectively pay
twice for the statistical error due to sampling $(X_n, Y_n)$ from
$p$. Once to learn $f$ and again while distilling $g^*$ from $f$ using the same samples.

There are a number of mechanisms to reduce the second error. If we had
access to more unlabeled data, say $X_m'$ with $m \gg n$ drawn from
$p$, we could reduce the statistical error due to distillation
significantly (see \cref{fig:unlabeledperf}).
While we usually cannot draw from $p$ for tabular data due to a lack of additional unlabeled examples (unlike say for images/text), we might be able to draw from a related
distribution $q$ which is sufficiently close. In this case we
can obtain a uniform convergence bound:
\vspace*{-0.2em}
\begin{lemma}[Surrogate Approximation]
  \label{lem:surrogate}
      Assume that the complexity of the function class $\mathcal{G}$
      is bounded under $d(f(x), \cdot)$ and
      $d(f(x), g(x)) \leq 1$ for all $x \in \Xcal$ and $g \in
      \Gcal$. Then there exists a constant $V$ such that
      with probability at least $1 - \delta$ we have
      \begin{align}
        \label{eq:surrogate}
        D(f, g^*, p) \leq D_{\mathrm{emp}}(f, g^*, X_m') + \sqrt{\rbr{V - \log \delta}/{m}} + \nbr{p-q}_1.
      \end{align}
      Here $X_m'$ are $m$ samples from $q$ and $g^* \in \Gcal$
      is chosen, e.g.\ to minimize $D_{\mathrm{emp}}(f, g, X_m')$.
\end{lemma}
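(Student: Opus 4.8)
The plan is to split \eqref{eq:surrogate} into a \emph{distribution-mismatch} contribution and a \emph{finite-sample} contribution, each controlled by one of the two hypotheses. First I would compare the population distillation objective under $p$ and under $q$: for any fixed $g \in \Gcal$,
\begin{align*}
  D(f,g,p) - D(f,g,q) &= \Eb_{x\sim p}\sbr{d(f(x),g(x))} - \Eb_{x\sim q}\sbr{d(f(x),g(x))} \\
  &= \int d(f(x),g(x))\,\rbr{p(x)-q(x)}\,dx ,
\end{align*}
and since $0 \le d(f(x),g(x)) \le 1$ for every $x$, the right-hand side is at most $\int \abs{p(x)-q(x)}\,dx = \nbr{p-q}_1$ in absolute value. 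In particular $D(f,g^\ast,p) \le D(f,g^\ast,q) + \nbr{p-q}_1$. This step is simply H\"older's inequality combined with the bounded-loss assumption, and it produces the last term of \eqref{eq:surrogate}.

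Second, I would invoke a uniform convergence bound for the induced loss class $\mathcal{H} := \cbr{\,x \mapsto d(f(x),g(x)) : g \in \Gcal\,}$. Its members take values in $[0,1]$, and it inherits bounded complexity from the hypothesis that $\Gcal$ has bounded complexity under $d(f(x),\cdot)$. Evaluating on the i.i.d.\ sample $X_m' \sim q$ and running the standard symmetrization-and-concentration argument of statistical learning theory \cite{vapnik1998statistical}, there is a constant $V$ --- absorbing the complexity of $\mathcal{H}$ and universal constants --- such that, with probability at least $1-\delta$,
\begin{align*}
  \sup_{g \in \Gcal}\,\rbr{D(f,g,q) - D_{\mathrm{emp}}(f,g,X_m')} \;\le\; \sqrt{\rbr{V - \log\delta}/m} .
\end{align*}
Since this holds simultaneously over all $g \in \Gcal$, it applies to the data-dependent choice $g^\ast$, so $D(f,g^\ast,q) \le D_{\mathrm{emp}}(f,g^\ast,X_m') + \sqrt{(V-\log\delta)/m}$. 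Chaining this with the first step gives exactly \eqref{eq:surrogate}.

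The step I expect to be the main obstacle is the second one. The crucial point is that we need a \emph{uniform} deviation bound over $\Gcal$, not a pointwise concentration inequality: because $g^\ast$ is chosen using the very sample $X_m'$ on which $D_{\mathrm{emp}}$ is evaluated, applying Hoeffding/Bernstein to the single fixed function $d(f(\cdot),g^\ast(\cdot))$ would be invalid. The two hypotheses are exactly what make the uniform bound possible --- the boundedness $d \le 1$ lets the symmetrization / McDiarmid machinery go through, and the complexity assumption on $\Gcal$ transfers to $\mathcal{H}$. A secondary, purely cosmetic issue is matching the exact form $\sqrt{(V-\log\delta)/m}$: one folds the (typically logarithmic-in-covering-number) complexity term and the $\log(1/\delta)$ confidence term under a single square root, which is loose only by constants that the freely-chosen $V$ absorbs.
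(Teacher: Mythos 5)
Your proposal is correct and follows essentially the same route as the paper's proof: a H\"older-type bound $D(f,g,p)-D(f,g,q) \le \nbr{p-q}_1$ using $d\le 1$ for the distribution-mismatch term, followed by a uniform convergence (VC/symmetrization) bound relating $D(f,g,q)$ to $D_{\mathrm{emp}}(f,g,X_m')$ over all of $\Gcal$. Your version merely spells out the details (in particular why the bound must be uniform rather than pointwise) that the paper's two-line proof leaves implicit.
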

\vspace*{-0.2em}
\begin{proof}
  This follows directly from H\"older's inequality when applied to
  $D(f,g,p) - D(f,g,q) = \int l(f(x), g(x)) (p(x) - q(x)) dx \leq C
  \|p-q\|_1$.
  Next we apply uniform convergence bounds to the difference between
  $D_{\mathrm{emp}}(f,g,X_m') - D(f,g,q)$. Using VC bounds
  \cite{vapnik1998statistical} proves the claim.
\end{proof}
The inequality~\cref{eq:surrogate} suggests a number of strategies when
designing algorithms for distillation.
 Whenever $p$ and $q$ are similar in terms of the bias $\nbr{p-q}_1$ being small, 
  we want to draw as much data as we can from $q$ to make the uniform convergence term vanish. 
 However if $q$ is some sort of estimate, a nontrivial difference between $p$ and $q$ will usually exist in practice. In this case, we may trade off the variance reduction offered by extra augmented samples and the corresponding bias by drawing these samples from an intermediate distribution that lies \emph{in between} the training data and the biased $q$.
 
%

\section{FAST-DAD Distillation via Augmented Data}
\label{s:methods}

The augmentation distribution $q$ in Lemma \ref{lem:surrogate} could be naively produced by applying density estimation to the  data $X_n$, and then sampling from the learnt density. Unfortunately,
multivariate density estimation and generative modeling are at least
as difficult as the supervised learning problems
AutoML aims to solve~\cite{sugiyama2012density}. 
It is however much easier to estimate $p(x^i|x^{-i})$, the univariate conditional of the feature $x^i$ given all the other features $x^{-i}$ in datum $x = (x^i, x^{-i})$. This suggests the following strategy which forms the crux of \method{}:
\begin{enumerate}[noitemsep,topsep=0pt,parsep=0pt,partopsep=0pt]
\item For all features $i$: estimate conditional distribution $p(x^i|x^{-i})$ using the training data.
\item Use all training data $x \in X_n$ as initializations for a Gibbs sampler \cite{geman1984stochastic}. That is, use each $x \in X_n$ to generate an MCMC chain  via: \  $\tilde{x}^i \sim p(x^i|x^{-i}), x^i \leftarrow \tilde{x}^i$.
\item Use the samples from all chains as additional data for distillation.
\end{enumerate}

We next describe these steps in detail but first let us see why this strategy can generate good augmented data. If our  conditional probability estimates $p(x^i|x^{-i})$ are accurate, the Gibbs sampler is guaranteed to converge to samples drawn from $p(x)$ regardless of the initialization~\cite{roberts1994simple}. In particular, initializing the sampler with data $x \in X_n$ ensures that it doesn't need time to `burn-in'; it starts immediately with samples from the correct distribution. Even if $p(x^i|x^{-i})$ is inaccurate (inevitable for small $n$), the sample $\tilde{x}$ will not deviate too far from $p(x)$ after a small number of Gibbs sampling steps (low bias), whereas using $\tilde{x} \sim q$ with an inaccurate $q$ would produce disparate samples.

\subsection{Maximum Pseudolikelihood Estimation via Self-Attention}
 A cumbersome aspect of the strategy outlined above is the need to model many conditional distributions $p(x^i|x^{-i})$ for different $i$. This would traditionally require many separate models. Here  
we instead propose a single self-attention architecture 
\cite{vaswani2017attention} with parameters $\theta$ that is trained to simultaneously estimate all conditionals via a pseudolikelihood objective~\cite{besag1977efficiency}:

\vspace*{-2em}
\begin{align}
  \label{eq:mle}
  \widehat{\theta} = \argmax_\theta \f{1}{n} \sum_{x \in X_n} \sum_{i=1}^d \log p(x^i\ | x^{-i}; \theta)
\vspace*{-1.5em}
\end{align}
For many models, maximum pseudolikelihood estimation produces asymptotically consistent parameter estimates, and  often is more computationally tractable than optimizing the likelihood~\cite{besag1977efficiency}. 
Our model takes as input $(x^1,\ldots,x^d)$ and simultaneously estimates the conditional distributions  $p(x^i|x^{-i}; \theta)$ for all features $i$ using a self-attention-based encoder.
As in Transformers, each encoder layer consists of a multi-head self-attention mechanism and a feature-wise feedforward block~\cite{vaswani2017attention}. 
Self-attention helps this model gather relevant information from $x^{-i}$ needed for modeling $x^i$.

Each conditional is parametrized as a mixture of Gaussians 
$p(x^i|x^{-i}; \theta) = \sum_{k=1}^K \lambda_k N(x^i; \mu_k, \sigma_k^2)$, where $\lambda_k, \mu_k, \sigma_k$ depend on $x^{-i}$ and are output by topmost layer of our encoder after processing $x^{-i}$. 
Categorical features are numerically represented using dequantization~\cite{UriaRNADE}.
To condition on $x^{-i}$ in a mini-batch (with $i$ randomly selected per mini-batch), we mask the values of $x^i$ to omit all information about the corresponding feature value (as in \cite{devlin2019bert}) and also mask all self-attention weights for input dimension $i$; this amounts to performing stochastic gradient descent on the objective in~\cref{eq:mle} across both samples and their individual features. We thus have an efficient way to compute any of these conditional distributions with one forward pass of the model. While this work utilizes self-attention, our proposed method can work with any efficient estimator of $p(x^i| x^{-i})$ for $i=1,\dots,d$.

\heading{Relation to other architectures}
Our approach can be seen as an extension of the mixture density
network~\cite{bishop1994mixture}, which can model arbitrary
conditional distributions, but not all conditionals simultaneously as
enabled by our use of masked self-attention with the pseudolikelihood
objective. It is also similar to TraDE~\cite{fakoor2020trade}:
however, their auto-regressive model requires imposing an arbitrary
ordering of the features. Since self-attention
is permutation-invariant \cite{lee2018set}, our pseudolikelihood model
is desirably insensitive to the order in which features happen to be
recorded as table columns.
Our use of masked self-attention shares many similarities with BERT \cite{devlin2019bert},
where the  goal is typically representation learning or text
generation \cite{wang2019bert}. In contrast, our method is designed
for data that lives in tables. We need to estimate the conditionals
$p(x^i| x^{-i};\theta)$ very precisely as they are used to sample
continuous values; this is typically not necessary for text models.

\subsection{Gibbs Sampling from the Learnt Conditionals}


We adopt the following procedure to draw Gibbs samples $\widetilde{x}$ to augment our training data: The sampler is initialized at some training example $x \in X_n$ and a random ordering of the features is selected (with different orderings used for different Gibbs chains started from different training examples). We cycle through the features and in each step replace the value of one feature in $\widetilde{x}$, say $\widetilde{x}^i$, using its conditional distribution given all the other variables, i.e. $p(x^i \mid x^{-i}; \widehat{\theta})$.
After every feature has been resampled, we say one \emph{round} of Gibbs sampling is complete, and proceed onto the next round by randomly selecting a new feature-order to follow in subsequent Gibbs sampling steps.

A practical challenge in Gibbs sampling is that a poor choice of  initialization may require many burn-in steps to produce reasonable samples. Suppose for the following discussion that our pseudolikelihood estimator and its learnt conditionals are accurate.
We can use a strategy inspired by Contrastive
Divergence \cite{hinton2002training} and initialize
the sampler at $x \in
X_n$ and take a few (often only one) Gibbs sampling steps. This strategy is effective; we need not wait for the sampler to burn in because it is initialized at (or close to) the true distribution itself. This is  
seen in~\cref{fig:toy} where we compare samples from the true distribution and Gibbs samples (taken with respect to conditional estimates from our self-attention network) starting from an arbitrary initialization vs.\ initialized at $X_n$. 

\begin{figure}[!htpb]
  \centering
  \begin{minipage}[c]{0.5\textwidth}
    \includegraphics[width=\textwidth]{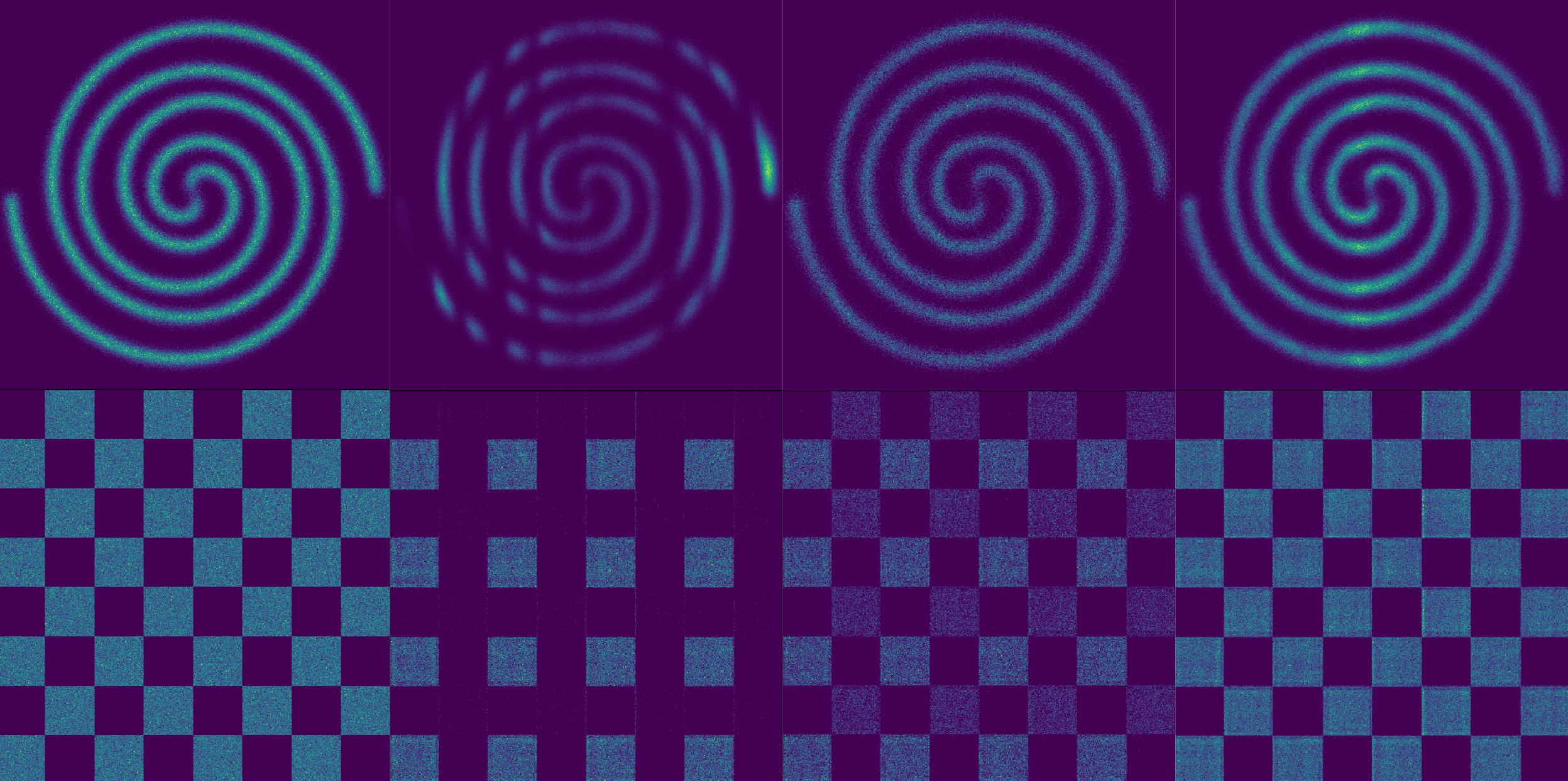}
  \end{minipage}\hspace{0.1in}
  \begin{minipage}[c]{0.45\textwidth}
    \caption{\tbf{Initialization of the Gibbs sampler.}
    From left to right: original training data, samples obtained from one
    round of Gibbs sampling with random initialization after fitting the self-attention network, samples
    obtained after multiple rounds of Gibbs sampling 
    (10 for the spiral, 100 for the checkerboard density) with random initialization, and samples
    obtained from one Gibbs sampling round when initializing via $X_n$. The densities were
    generated from examples in~\citet{nash2019arg}.}
  \label{fig:toy}
\end{minipage}
\end{figure}

For distillation, we expect this sampling strategy to produce   better augmented data. 
The number of Gibbs sampling steps provides fine-grained
control over the sample fidelity and diversity of the resulting dataset used in distillation.
Recall that the student will be trained over
$X_n \cup X_m'$ in practice. When our estimates of
$p(x^i|x^{-i})$ are accurate, it is desirable to produce
$X_m'$ only after a large number of Gibbs steps, as the
bias in \cref{eq:surrogate} will remain low and we would like to
ensure the $X_m'$ are more statistically independent from
$X_n$. With worse
estimates of $p(x^i|x^{-i})$, it is better to produce
$X_m'$ after only a few Gibbs steps to ensure lower bias in
\cref{eq:surrogate}, but the lack of burn-in implies 
$X_m'$ are not independent of $X_n$ and may thus be less useful to the student during distillation. We dig deeper into this phenomenon (for the special case of $m=n$) in the following theorem. 
\vspace*{-0.2em}
\begin{theorem}[Refinement of Lemma 1]
\label{thm:refinement}
Under the assumptions of Lemma 1, suppose the student $g^*$ minimizes $D_{\trm{emp}}(f, g, X_n \cup X_n')$ where $X_n'$ are $n$ samples drawn after $k$ steps of the Gibbs sampler initialized at $X_n$. Then there exist constants $V,c, \delta > 0$ such that with probability $\geq 1-\delta$:
\begin{align}
D(f, g^*, p) \le 
    D_{\trm{emp}}(f, g^*, X_n \cup X_n') + \sqrt{\f{4 V(c + \Delta_k) - \log \delta}{n}} + \Delta_k
\end{align}
$\Delta_k = \norm{T^k_q p - p}_{\trm{TV}}$ is the total-variation norm between $p$ and $T^k_q p$ (the distribution of Gibbs samples after $k$ steps), where $q$ denotes the steady-state distribution of the Gibbs sampler.
\end{theorem}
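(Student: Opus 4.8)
The plan is to re-run the argument behind \cref{lem:surrogate}, with two modifications: replace the single augmentation law $q$ by the \emph{mixture} distribution that actually generates the combined dataset, and upgrade the crude VC bound to a variance-sensitive (Bernstein-type) one so that the complexity term can be coupled to $\Delta_k$. Write $r_k := T^k_q p$ for the law of a $k$-step Gibbs sample started from $p$, and $\bar p_k := \tfrac12(p + r_k)$. The key structural point is that, although the $2n$ points in $X_n \cup X_n'$ are \emph{not} mutually independent — each $x_i' \in X_n'$ comes from a chain initialized at $x_i \in X_n$, and in fact $X_n' = X_n$ when $k=0$ — the $n$ pairs $z_i := (x_i, x_i')$ \emph{are} i.i.d., with $x_i \sim p$ and the marginal law of $x_i'$ equal to $r_k$. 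Setting $h_g(x,x') := \tfrac12\sbr{d(f(x),g(x)) + d(f(x'),g(x'))} \in [0,1]$, one has $D_{\trm{emp}}(f,g,X_n\cup X_n') = \tfrac1n\sum_{i=1}^n h_g(z_i)$ with $\Eb[h_g(z_i)] = D(f,g,\bar p_k)$, so classical i.i.d.\ uniform-convergence machinery applies to the bounded class $\{h_g : g\in\Gcal\}$ \emph{with sample size $n$} (this is why the final rate is $\sqrt{\cdot/n}$ and not $\sqrt{\cdot/(2n)}$).

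\textbf{Bias.} Since $p - \bar p_k = \tfrac12(p - r_k)$ and $d(f(\cdot),\cdot)\le 1$ under the assumptions of \cref{lem:surrogate}, Hölder's inequality gives, for every $g\in\Gcal$,
\begin{align*}
\abs{D(f,g,p) - D(f,g,\bar p_k)} \;=\; \tfrac12\,\Bigabs{\int d(f(x),g(x))\,\rbr{p(x) - r_k(x)}\,dx} \;\le\; \tfrac12\norm{p - r_k}_1 \;=\; \norm{T^k_q p - p}_{\trm{TV}} \;=\; \Delta_k,
\end{align*}
so switching between the target risk $D(f,\cdot,p)$ and the combined-dataset population risk $D(f,\cdot,\bar p_k)$ costs at most $\Delta_k$; this is the additive $\Delta_k$ in the statement.

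\textbf{Variance-sensitive convergence.} The class $\{h_g\}$ has complexity controlled by the same constant $V$ as in \cref{lem:surrogate} up to an absolute factor (each $h_g$ is an average of two copies of $d(f(\cdot),\cdot)$), so a Bernstein-type uniform deviation bound \cite{vapnik1998statistical} yields, with probability $\geq 1-\delta$ and simultaneously over $g\in\Gcal$,
\begin{align*}
D(f,g,\bar p_k) \;\le\; D_{\trm{emp}}(f,g,X_n\cup X_n') + \sqrt{\f{c'\rbr{V\,\trm{Var}\sbr{h_g} + \log(1/\delta)}}{n}} + \f{c''\rbr{V + \log(1/\delta)}}{n}
\end{align*}
for absolute constants $c',c''$. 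For the ERM student $g^*$, since $h_{g^*}\in[0,1]$ we have $\trm{Var}\sbr{h_{g^*}} \le \Eb[h_{g^*}^2] \le \Eb[h_{g^*}] = D(f,g^*,\bar p_k) \le D(f,g^*,p) + \Delta_k \le c + \Delta_k$, where $c$ is an a priori bound on the attainable distillation risk in $\Gcal$ (in the worst case $c = 1$). Plugging $\trm{Var}\sbr{h_{g^*}} \le c + \Delta_k$ into the display, folding the lower-order $O(1/n)$ term and the constants $c',c''$ into a single complexity constant (this is what the factor $4$ absorbs), and adding back the bias $\Delta_k$ from the previous step, gives
\begin{align*}
D(f,g^*,p) \;\le\; D(f,g^*,\bar p_k) + \Delta_k \;\le\; D_{\trm{emp}}(f,g^*,X_n\cup X_n') + \sqrt{\f{4V(c + \Delta_k) - \log\delta}{n}} + \Delta_k,
\end{align*}
which is the claim.

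\textbf{Main obstacle.} The delicate point is the correlation between $X_n$ and $X_n'$: for small $k$ the augmented points are strongly dependent on the originals, so $X_n\cup X_n'$ cannot be treated as $2n$ i.i.d.\ draws — the reduction to the $n$ i.i.d.\ pairs $z_i$ is what makes the learning-theoretic bounds go through, at the necessary price of effective sample size $n$. The second subtlety is circularity: bounding $\trm{Var}\sbr{h_{g^*}}$ by $c+\Delta_k$ uses a bound on $D(f,g^*,p)$, so one either fixes $c$ in advance as an upper bound on the attainable distillation error in $\Gcal$ (the natural regime, where $c$ is small and the $c+\Delta_k$ coupling is informative), or simply takes the trivial $c=1$ — in which case the improvement over \cref{lem:surrogate} is carried entirely by the bias, since $\Delta_k = \norm{T^k_q p - p}_{\trm{TV}} \le 2\norm{p-q}_{\trm{TV}} = \norm{p-q}_1$ by the triangle inequality and contractivity of the Gibbs kernel, while $\Delta_0 = 0$, so $\Delta_k$ can be driven down by reducing $k$.
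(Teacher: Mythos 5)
Your argument arrives at the stated bound, but by a genuinely different route than the paper's. The paper invokes Baxter's multi-task generalization bound for the product class $\Gcal\times\Gcal$ over the two ``tasks'' $p$ and $q^k := T_q^k p$, explicitly \emph{assumes} that $X_n$ and $X_n'$ are statistically independent (justified by taking $k$ large enough), and makes $\Delta_k$ appear inside the square root through the covering number: the metric on $\Gcal^2$ differs from that on $\Gcal$ by an additive $\Delta_k$, so an $\epsilon$-net for $\Gcal^2$ behaves like an $(\epsilon-\Delta_k)$-net for $\Gcal$, and a first-order expansion of Haussler's bound $\log N(\epsilon,\Gcal)\le 2V\log(c/\epsilon)$ in $\Delta_k/\epsilon$ produces the $4V(c+\Delta_k)$ factor. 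You instead (i) reduce to the $n$ i.i.d.\ pairs $z_i=(x_i,x_i')$, which is arguably cleaner --- it dispenses with the independence assumption between $X_n$ and $X_n'$ while still explaining the $\sqrt{\cdot/n}$ rate --- and (ii) obtain the $(c+\Delta_k)$ coupling from a Bernstein-type, variance-sensitive deviation bound, controlling $\trm{Var}[h_{g^*}]$ by its mean plus the bias. The bias term $\Delta_k$ is obtained essentially identically in both proofs (H\"older applied to $\int d(f,g^*)\,\trm{d}(p-q^k)$, modulo the factor-of-two convention for $\norm{\cdot}_{\trm{TV}}$). Each route is heuristic at exactly one step, and the steps differ: the paper's Taylor expansion needs $\Delta_k\ll\epsilon$ and an unexplained ``$\approx$'', while your variance bound needs an a priori constant $c$ bounding $D(f,g^*,p)$, which is mildly circular (as you note, $c=1$ makes it rigorous but then the $\Delta_k$ inside the root is slack rather than earned). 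Net assessment: your pairing device is an improvement in how the $X_n$--$X_n'$ dependence is handled, whereas the paper's covering-number argument is closer to what the constants $V,c$ in the theorem statement literally refer to; neither derivation is fully rigorous, and they are heuristic in different places.
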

\vspace*{-0.5em}
%
The proof (in~\cref{s:proof_refinement}) is based on multi-task generalization bounds~\cite{baxter2000model} and MCMC mixing rates~\cite{wang2014convergence}.  Since $\Delta_k \to \norm{T^k_q p-q}_{\trm{TV}}$ as $k \to \infty$, we should use Gibbs samples from a smaller number of steps when $q$ is inaccurate (e.g.\ if our pseudolikelihood estimator is fit to limited data).

\subsection{Training the Student with Augmented Data}

While previous distillation works focused only on particular tasks
\cite{bucilua2006compress,hinton2015distilling}, we consider
the range of regression
and classification tasks. Our overall
approach is the same for each problem type:
\begin{enumerate}[noitemsep,topsep=0pt,parsep=0pt,partopsep=0pt]
    \item Generate a set of augmented samples $X_m' = \cbr{x_k'}_{k=1,\ldots,m}$.
    \item Feed the samples $X_m'$ as inputs to the teacher model to
      obtain predictions $Y_m'$, which are the predicted class
      probabilities in classification (rather than hard class
      labels), and predicted scalar values in regression.
    \item Train each student model on the augmented dataset
      $(X_n,Y_n) \cup (X_m', Y_m')$.
\end{enumerate}
In the final step, our student model is fit to a combination of both
true labels from the data $y$ as well as as augmented labels $y'$ from
the teacher, where  $y'$ is of different form than $y$ in
classification (predicted probabilities rather than predicted
classes). For binary classification tasks, we employ the Brier
score~\cite{brier1950verification} as our loss function for all
students, treating both the probabilities assigned to the positive
class by the teacher and the observed $\{0,1\}$ labels as continuous
regression targets for the student model. The same strategy was
employed by \citet{bucilua2006compress} and it slightly outperformed
our alternative multiclass-strategy in our binary classification
experiments. We handle multiclass classification in a manner specific to different
types of models, avoiding cumbersome students that maintain a separate
model for each class (c.f.\ one-vs-all). Neural network students are
trained using the cross-entropy loss which can
be applied to soft labels as well. Random forest
students can utilize multi-output decision trees
\cite{segal2011multivariate} and thus be trained as native
multi-output regressors against targets which are one-hot-encoded
class labels in the real data and teacher-predicted probability
vectors in the augmented data.
Boosted tree models are similarly used to predict vectors with one
dimension per class, which are then passed through a softmax
transformation; the cross entropy loss is
minimized via gradient boosting in this case.

\section{Experiments}

\heading{Data}
We evaluate various methods on $30$ 
datasets (\cref{tab:datasets}) spanning regression tasks from the UCI ML Repository and binary/multi classification tasks from OpenML, which are included in popular deep learning and AutoML  benchmarks  \citep{bayesdl,lakshminarayanan2017simple,jain2019maximizing,gijsbers2019open,truong2019towards, erickson2020autogluon}.
To facilitate comparisons on a meaningful scale across datasets, we evaluate methods on the provided test data based on either their accuracy in classification, or percentage of variation explained (= $R^2 \cdot 100$) in regression. The training data are split into training/validation folds (90-10), and only the training fold is used for augmentation (validation data keep their original labels for use in model/hyper-parameter selection and early-stopping).

\heading{Setup}
We adopt AutoGluon as our teacher as this system has demonstrated higher accuracy than most other AutoML frameworks and human data science teams \cite{erickson2020autogluon}. AutoGluon is fit to each training dataset for up to 4 hours with the \texttt{auto\_stack} option which boosts accuracy via extensive model ensembling (all other arguments left at their defaults).
The most accurate ensembles produced by AutoGluon often contain over 100 individual models trained via a combination of multi-layer stacking with repeated 10-fold bagging and the use of multiple hyperparameter values \cite{erickson2020autogluon}. Each model trained by AutoGluon is one of: (1) Neural Network (NN), (2) CatBoost, (3) LightGBM, (4) Random Forest (RF), (5) Extremely Randomized Trees, and (6) K-Nearest Neighbors.

We adopt the most accurate AutoGluon ensemble (on the validation data) as the teacher model. We use models of types (1)-(4) as students, since these are more efficient than the others and thus more appropriate for distillation. These are also some of the most popular types of models among today's data scientists \cite{kaggletrends}.
We consider how well each individual type of model performs under different training strategies, as well as the overall performance achieved with each strategy after a model selection step in which the best individual model on the validation data (among all 4 types) is used for prediction on the test data. This \emph{Selected} model reflects how machine learning is operationalized in practice. All candidate student models (as well as the BASE models) of each type share the same hyper-parameters and are expected to have similar size and inference latency.

\subsection{Distillation Strategies}

\begin{figure}[!htpb]
\centering
\captionsetup[subfigure]{justification=centering}
\begin{subfigure}[t]{0.44\textwidth}
\includegraphics[width=\textwidth]{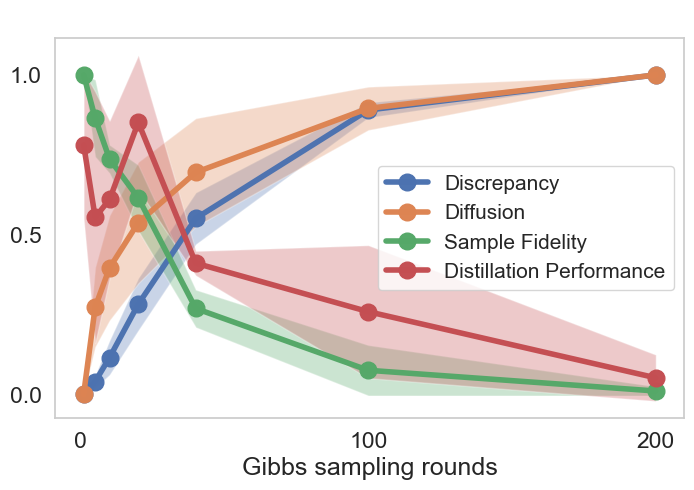}
\caption{}
\label{fig:gibbs_trend}
\end{subfigure}
\hspace{0.1in}
\begin{subfigure}[t]{0.40\textwidth}
\centering
\includegraphics[width=\textwidth]{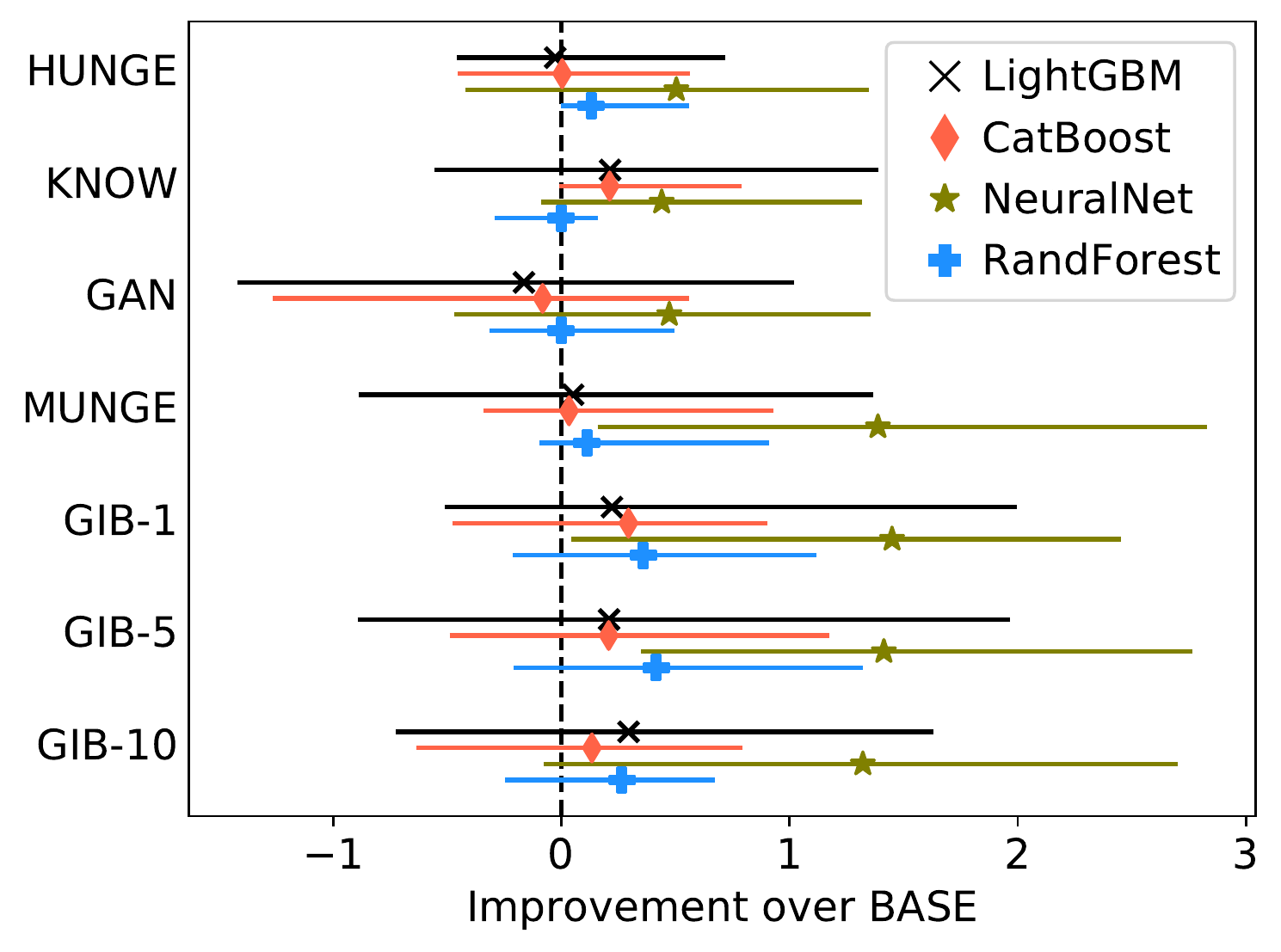}
\caption{}
\label{fig:improvements}
\end{subfigure}
\caption{\tbf{\cref{fig:gibbs_trend} Normalized metrics evaluated on samples from various Gibbs rounds (averaged across 3 datasets).} 
\emph{Sample Fidelity} measures how well a random forest discriminator can distinguish between (held out) real and Gibbs-sampled data. \emph{Diffusion} is the average Euclidean distance between each Gibbs sample and the datum from which its Markov chain was initialized. \emph{Discrepancy} is the Maximum Mean Discrepancy \cite{gretton2012kernel} between the Gibbs samples and the training data; it measures both how well the samples approximate $p$ as well as how distinct they are from data $X_n$. \emph{Distillation Performance} is the test accuracy of student models trained on the augmented data (averaged over our 4 model types).
The diversity of the overall dataset used for distillation grows with increased discrepancy/diffusion, while this overall dataset more closely resembles the underlying data-generating distribution with increased sample fidelity (lower bias).
The discriminator's accuracy ranges between $[0.49, 0.90]$ for these datasets.
\tbf{\cref{fig:improvements} Percentage points improvement over the BASE model produced by each distillation method} for different model types (change in: accuracy for classification, explained variation for regression). As the improvements contain outliers/skewness, we show the median change across all datasets (dots) and the corresponding interquartile range (lines).
}
\label{fig:mixed}
\end{figure}

We compare our FAST-DAD Gibbs-augmented distillation technique with the following methods.

\textbf{TEACHER}: The model ensemble produced by AutoGluon fit to the training data. This is adopted as the teacher in all distillation strategies we consider.
\textbf{BASE}: Individual base models fit to the original training data. 

\textbf{KNOW}: \emph{Knowledge distillation} proposed by \citet{hinton2015distilling}, in which we train each student model on the original training data, but with labels replaced by predicted probabilities from the teacher which are smoothed and nudged toward the original training labels (no data augmentation). 

\textbf{MUNGE}: The technique proposed by \citet{bucilua2006compress} to produce augmented data for model distillation, where the augmented samples are intended to resemble the underlying feature distribution. MUNGE augmentation may be viewed as a few steps of Gibbs sampling, where $p(x^i|x^{-i})$ is estimated by first finding near neighbors of $x$ in the training data and subsequently sampling from the (smoothed) empirical distribution of their $i^{\text{th}}$ feature \cite{owen1987nonparametric}.

%
%
\textbf{HUNGE}: To see how useful the teacher's learned label-distributions are to the student, we apply a \emph{hard} variant of MUNGE. Here we produce MUNGE-augmented samples that receive hard class predictions from the teacher as their labels rather than the teacher's predicted probabilities that are otherwise the targets in all other distillation strategies (equivalent to MUNGE for regression).

\textbf{GAN}:  The technique proposed by \citet{xu2019modeling} for augmenting tabular data using conditional deep generative adversarial networks (GANs); this performs better than other GANs~\cite{liu2018teacher}. Like our model, this GAN is trained on the training set and then used to generate augmented $x$ samples for the student model, whose labels are the predicted probabilities output by the teacher. Unlike our Gibbs sampling strategy, it is difficult to control how similar samples from the GAN should be to the training data.

We also run our Gibbs sampling data augmentation strategy generating samples after various numbers of Gibbs sampling rounds (for example, \textbf{GIB-5} indicates 5 rounds were used to produce the augmented data).  Under each augmentation-based  strategies, we add $m$ synthetic datapoints to the training set for the student, where $m= 10 \times$ the number of original training samples (up to at most $10^6$).


\subsection{Analysis of the Gibbs Sampler}


To study the behavior of our Gibbs sampling procedure, we evaluate it on a number of different criteria (see \cref{fig:gibbs_trend} caption).
\cref{fig:gibbs_trend} depicts how the distillation dataset's overall diversity increases with additional rounds of Gibbs sampling. Fortuitously, we do not require a large number of Gibbs sampling rounds to obtain the best distillation performance and can thus efficiently generate augmented data. Running the Gibbs sampling for longer is ill-advised as its stationary distribution appears to less closely approximate $p$ than intermediate samples from a partially burned-in chain; this is likely due to the fact that we have limited data to fit the self-attention network.

\begin{table*}[!bth]
\renewcommand{\arraystretch}{1.4}
\centering
\caption{
\tbf{Average ranks/performance achieved by the \emph{Selected} model} under each training strategy across the datasets from each prediction task. Performance is test accuracy for classification  or percentage of variation explained for regression, and we list $p$-values for the one-sided test of whether each strategy $\ge$ BASE.
}
\label{tab:selavgs}
\vspace*{0.2em}
\centering
\begin{footnotesize}
\resizebox{0.9 \textwidth}{!}{
\begin{tabular}{lcccgggccc}
\toprule
\textbf{Strategy} & \textbf{Rank} & \textbf{Accuracy} & \textbf{$\bm{p}$} 
& \cellcolor{white} \textbf{Rank} & \cellcolor{white} \textbf{Accuracy} & \cellcolor{white} \textbf{$\bm{p}$} &
\textbf{Rank} & \textbf{Accuracy} & \textbf{$\bm{p}$} \\ 
\midrule
          BASE &               6.888 &               88.63 &                    - &               5.791 &               82.85 &                    - &                 7.777 &                 80.80 &                      - \\
         HUNGE &                 5.0 &               88.99 &                0.092 &               5.541 &               83.57 &                0.108 &                 7.666 &                 81.04 &                  0.350 \\
          KNOW &               6.555 &               88.49 &                0.712 &                5.25 &               83.89 &                0.072 &                 5.555 &                 81.39 &                  0.275 \\
           GAN &               6.666 &               88.65 &                0.450 &               6.708 &               83.17 &                0.250 &                 6.055 &                 82.26 &                  0.069 \\
         MUNGE &               5.444 &               88.88 &                0.209 &               5.083 &               83.72 &                0.126 &                 4.333 &                 82.80 &                  0.007 \\
         GIB-1 &               3.777 &               \textbf{89.35} &                0.025 &               \textbf{3.708} &               \textbf{84.21} &                \textbf{0.051} &                 \textbf{3.277} &                 \textbf{82.88} &                  \textbf{0.005} \\
         GIB-5 &               \textbf{3.333} &               89.25 &                \textbf{0.004} &               5.375 &               84.04 &                0.098 &                 3.388 &                 82.76 &                  0.010 \\
        GIB-10 &               4.777 &               89.09 &                0.044 &               4.958 &               83.74 &                0.087 &                 4.222 &                 82.64 &                  0.010 \\
       TEACHER &               2.555 &               90.10 &                0.036 &               2.583 &               84.40 &                0.019 &                 2.722 &                 83.84 &                  0.018 \\
\hline
& \multicolumn{3}{c}{\textbf{Regression Problems}} & 
 \multicolumn{3}{c}{\cellcolor{LightGray} \textbf{Binary Classification}} &
\multicolumn{3}{c}{\textbf{Multiclass Classification}} 
\end{tabular}

}
\end{footnotesize}
\end{table*}

\subsection{Performance of Distilled Models}

\cref{tab:selavgs} and \cref{fig:improvements} demonstrate that our Gibbs augmentation strategy produces far better resulting models than any of the other strategies.
\cref{tab:selaccs} shows the only datasets where Gibbs augmentation fails to produce better models than the BASE training strategy are those where the teacher ensemble fails to outperform the best individual BASE model (so  little can be gained from distillation period).
As expected according to~\citet{hinton2015distilling}:
KNOW helps in classification but not regression, and HUNGE fares worse than MUNGE on multiclass problems where its augmented hard class-labels fail to provide students with the teacher's \emph{dark knowledge}. As previously observed \cite{bucilua2006compress}, MUNGE greatly improves the performance of neural networks, but  provides less benefits for the other model types than augmentation via our Gibbs sampler.  Overparameterized deep networks tend to benefit from distillation more than the tree models in our experiments (although for numerous datasets  distilled tree models are still \emph{Selected} as the best model to predict with). While neural nets trained in the  standard fashion are usually less accurate than trees for tabular data, \method{} can boost their performance above that of trees, a goal other research has struggled to reach \cite{biau2019neural,saberian2019gradient,popov2019neural,ke2019deepgbm,tabnn}.

\cref{fig:improvements-hero,fig:binaryacclat,fig:automlregressmulti} depict the (normalized/raw) accuracy and inference-latency of our distilled models (under the GIB-1 strategy which is superior to others), compared with both the teacher (AutoGluon ensemble), as well as ensembles produced by H2O-AutoML \cite{H2O} and AutoSklearn \cite{feurer2019auto}, two  popular AutoML frameworks that have been shown to outperform other AutoML tools   \cite{truong2019towards,guyon2019analysis}. On average, the \emph{Selected}  individual model under standard training (BASE) would be outperformed by these AutoML ensembles, but surprisingly, our distillation approach produces \emph{Selected} individual models that are both more accurate and over 10$\times$ more efficient than H2O and AutoSklearn. 
In multiclass classification, our distillation approach also confers significant accuracy gains over standard training. The resulting individual \emph{Selected} models come close to matching the accuracy of H2O/AutoSklearn while offering much lower latency, but gains may be limited since the AutoGluon teacher appears only marginally more accurate than H2O/AutoSklearn in these multiclass problems.



\begin{wrapfigure}{r}{0.42\textwidth}
\vspace*{-2.5mm}
\centering
    \includegraphics[width=0.42\textwidth]{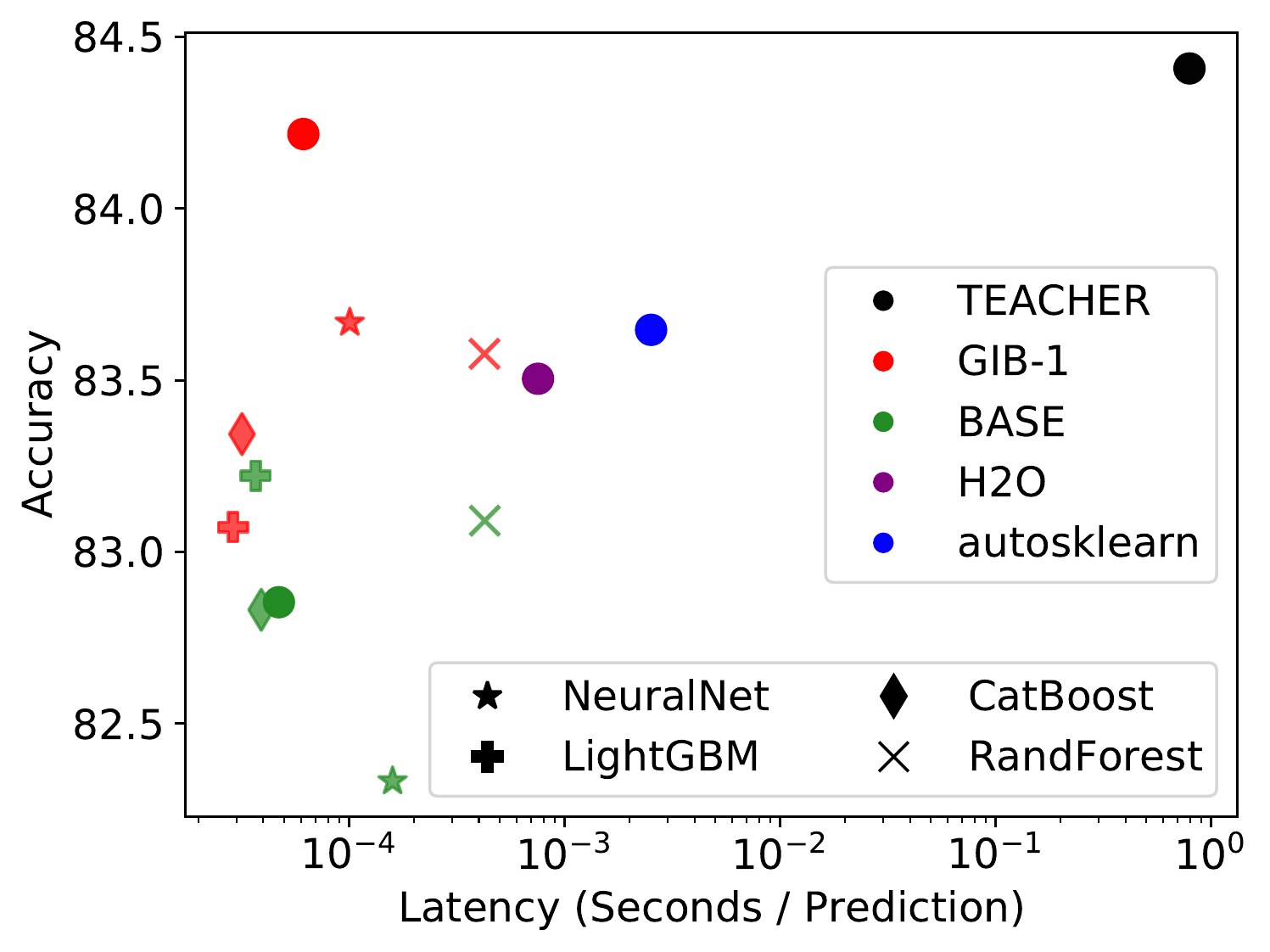}
    \caption{\tbf{Raw test accuracy vs.\ speed 
      of individual models and AutoML ensembles, averaged over binary classification datasets}. TEACHER denotes the performance of AutoGluon; H2O and autosklearn represent the respective AutoML tools. GIB-1 indicates the results of \method{} after 1 round of Gibbs sampling. BASE denotes the student model fit on the original data.  GIB-1/BASE dots represent the  \emph{Selected} model. 
      }  
    \label{fig:binaryacclat}
    \vspace{10mm}
\end{wrapfigure}

\section{Discussion}
Our goal in this paper is to build small, fast models that can bootstrap off large, ensemble-based AutoML predictors via model distillation to perform better than they would if directly fit to the original data.
The key challenge arises when the data to train the student are limited. We propose to estimate the conditional distributions of all features via maximum pseudolikelihood with masked self-attention, and Gibbs sampling techniques to sample from this model for augmenting the data available to the student.
Our strategy neatly suits the application because it: (i) avoids multivariate  density estimation (pseudolikelihood only involves univariate conditionals), (ii) does not require separate models for each conditional (the self-attention model simultaneously computes all conditionals), (iii) is far more efficient than usual MCMC methods (by initializing the Gibbs sampler at the training data), (iv) allows control over the quality and diversity of the resulting augmented dataset (we can select samples from specific Gibbs rounds unlike from, say, a GAN).
We used the high-accuracy ensembles produced by AutoGluon's AutoML to improve standard boosted trees, random forests, and neural networks via distillation.

\clearpage
{\small
\bibliographystyle{IEEEtranN}
\bibliography{distill}

}




\clearpage \newpage
\beginsupplement

\appendix
\setcounter{page}{1}

\begin{center}
{\Large \bf Appendix: \ 
Fast, Accurate, and Simple Models for Tabular Data \\
\hspace*{-30.5mm} via Augmented Distillation 
}
\end{center}

\section{Methods Details}

We not only adopt the  AutoGluon\footnote{\url{https://github.com/awslabs/autogluon/}} predictor as our teacher for distillation, but our experiments also use the AutoGluon implementation of each individual model type (NN, RF, LightGBM, CatBoost) for our student/BASE predictors\footnote{Although we selected AutoGluon as the AutoML tool for this paper's experiments, we emphasize that none of our distillation methodology is specific to AutoGluon teachers/students.}. 
Here we consider the same data preprocessing and hyperparameters as AutoGluon uses by default, which have been demonstrated to be highly performant~\citesi{erickson2020autogluon}. 

Unlike the RF/LightGBM/CatBoost models which are implemented in popular third party packages, the NN model is implemented directly in AutoGluon, and offers numerous advantages for tabular data over standard feedforward architectures \citesi{erickson2020autogluon}. 
This network uses a separate embedding layer for each categorical feature which helps the network separately learn about each of these variables before their representations are blended together by fully-connected layers \citesi{guo2016entity,fastai}. The network employs skip connections for improved gradient flow, with both shallow and deep paths connecting the input to the output \citesi{cheng2016wide}.

Note that all of our student classifiers produce valid predicted probabilities: our neural network student employs a sigmoid output layer to constrain its outputs to $[0,1]$ in binary classification, and the random forest multiclass students never output negative values (these models do not extrapolate) so we can simply re-normalize their output vectors to have unit-sum.


\subsection{Architecture of our Pseudolikelihood Self-Attention Model}

The input layer of our self-attention network applies a linear embedding operation followed by positional encoding.
Each internal layer of the network is a Transformer block, which includes two sub-blocks: a multi-head self-attention mechanism and a position-wise fully connected feedforward block~\cite{vaswani2017attention}. Each of these sub-blocks is wrapped with layer normalization and a residual connection. Here different positions correspond to different features (columns of the table). 
The output layer of this network produces a mixture of multivariate Gaussians with diagonal covariance, where the final position-wise feedforward block outputs for each feature $i$ both the mean/variance of each Gaussian component ($\mu_k,\sigma_k$) as well as the mixing components ($\lambda_k$). In order to make sure that all input features are on a similar scale, all features are rescaled to mean-zero unit-variance before being fed into our network (and we apply the inverse transform after Gibbs sampling).

Positional encoding is essential for the model to know which value was taken by which feature. For example, without positional encoding:   $x^{(1)} = 1, x^{(2)} = 0$ v.s.\ $x^{(1)} = 0, x^{(2)} = 1$ would lead to similar self-attention input for the third feature $x^{(3)}$ without positional encoding. Thus the representations of our model would suffer, as would its estimated conditional distributions. Here we employ the same sin/cos positional encodings used by \citeauthor{vaswani2017attention}  \citesi{vaswani2017attention}, treating the table column-index of each feature analogously to word positions in a sentence.

Tabular data can contain both numerical and categorical features. In order to have a simple, unified model that can deal with both feature types, we represent categorical features numerically using dequantization
~\cite{UriaRNADE}. This involves adding uniform noise an the ordered integer encoding of the categories to make these features look numerical to our network. The noise can be inverted via rounding to ensure that discrete categories are produced by our Gibbs sampler  (i.e.\ re-quantization). 
Dequantization has been successfully employed in a number of deep architectures that otherwise operate on continuous data  \citesi{hoogeboom2020learning,theis2016note, ma2019macow}, and allows us to avoid having to employ  heterogeneous output layers and unwieldy one-hot enodings.

\cref{tab:hp} shows our network's hyper-parameters that are used for the experiments in this paper. It is worth noting that we did not conduct any hyper-parameter search to find the best-performing architectures and models. Instead, we simply utilize two different networks: \emph{Small} and \emph{Large}. The \emph{Small} network is used whenever the training dataset has less than $15000$ examples and we otherwise use the \emph{Large} network. Their only differences are in batch sizes and the width of their hidden layers, all other details such as training procedure, regularization, evaluation protocol, etc. are the same. We  utilize two different models in order to avoid overfitting small datasets, and the \emph{Small} network can also be more efficiently  trained. 
We use Adam to optimize the parameters of our network~\cite{kingma2014adam}.

\begin{table*}[!b]
\begin{center}
\begin{tabular}{p{5cm} rr}
& \emph{Small} & \emph{Large} \\
\toprule
\toprule
Gaussian mixture components & 100 & 100 \\
Number of layers            & 4 & 4 \\
Multi-head attention heads   & 8 & 8 \\
Hidden unit size              & 32 & 128 \\
Mini-batch size             & 16 & 256 \\
Dropout                     & 0.1 & 0.1 \\
Learning rate               & 3E-4 & 3E-4\\
Weight decay                & 1E-6 & 1E-6 \\
Gradient clipping norm      & 5 & 5 \\
\bottomrule
\end{tabular}
\end{center}
\caption{Hyper-parameters of our self-attention models.}
\label{tab:hp}
\vspace*{1em}
\end{table*}

\begin{table*}[!b]
\centering
\begin{tabular}{llllc}
\toprule
\textbf{Dataset} & \textbf{Type} & \textbf{Sample Size} & \textbf{\# Columns} & \textbf{\# Classes} \\
\midrule
        amazon &        binary &                32769 &                     9 &                    -  \\
    australian &        binary &                  690 &                    14 &                    -  \\
     miniboone &        binary &               130064 &                    50 &                    -  \\
         adult &        binary &                48842 &                    14 &                    -  \\
         blood &        binary &                  748 &                     4 &                    -  \\
      credit-g &        binary &                 1000 &                    20 &                    -  \\
         higgs &        binary &                98050 &                    28 &                    -  \\
       jasmine &        binary &                 2984 &                   144 &                    -  \\
         nomao &        binary &                34465 &                   118 &                    -  \\
   numerai28.6 &        binary &                96320 &                    21 &                    -  \\
       phoneme &        binary &                 5404 &                     5 &                    -  \\
       sylvine &        binary &                 5124 &                    20 &                    -  \\
     covertype &    multiclass &               581012 &                    54 &                     7 \\
        helena &    multiclass &                65196 &                    27 &                   100 \\
        jannis &    multiclass &                83733 &                    54 &                     4 \\
       volkert &    multiclass &                58310 &                   180 &                    10 \\
     connect-4 &    multiclass &                67557 &                    42 &                     3 \\
  jungle-chess &    multiclass &                44819 &                     6 &                     3 \\
 mfeat-factors &    multiclass &                 2000 &                   216 &                    10 \\
       segment &    multiclass &                 2310 &                    19 &                     7 \\
       vehicle &    multiclass &                  846 &                    18 &                     4 \\
        boston &    regression &                  506 &                    13 &                    -  \\
      concrete &    regression &                 1030 &                     8 &                    -  \\
        energy &    regression &                  768 &                     8 &                    -  \\
        kin8nm &    regression &                 8192 &                     8 &                    -  \\
         naval &    regression &                11934 &                    16 &                    -  \\
         power &    regression &                 9568 &                     4 &                    -  \\
       protein &    regression &                45730 &                     9 &                    -  \\
          wine &    regression &                 1599 &                    11 &                    -  \\
         yacht &    regression &                  308 &                     6 &                    -  \\
\bottomrule
\end{tabular}

\caption{Summary of 30 datasets considered in this work, listing the: type of prediction problem, size of the data table, and number of classes for multiclass classification problems. The regression data (along with provided train/test splits) were downloaded from: \url{https://github.com/yaringal/DropoutUncertaintyExps}. The classification data  (with provided train/test splits) were downloaded from:  \url{https://www.openml.org/s/218}. We initially considered  additional classification datasets from {\citet{gijsbers2019open}}, but decided to not to include those for which: it was trivial to get near 100\% accuracy for many model types (so a teacher is unnecessary), the data are dominated by missing values, the original data are extremely high-dimensional ($d > 1000$), or 
the original data did not come from a table (e.g.\ Fashion-MNIST).
}
\label{tab:datasets}
\vspace*{1em}
\end{table*}

\clearpage
\section{Experiment Details}
\label{sec:experimentdetails}

We implemented knowledge distillation (KNOW) with classification targets modified as suggested in \citetsi{hinton2015distilling}.
As suggested by \citetsi{bucilua2006compress}, the distance metric in MUNGE is taken to be the Euclidean distance between (rescaled) numerical features and the Hamming distance between categorical features. Over all datasets, we performed a grid search over MUNGE's user-specified parameters: the feature-resampling probability $p$ and local variance parameter $s$, in order to maximize validation accuracy of the student over $p \in \{ 0.1, 0.25, 0.5, 0.75 \}, s \in \{ 0.5, 1.0, 5.0 \}$. For the conditional tabular GAN, we used the original implementation available at: \url{https://github.com/sdv-dev/CTGAN}.

On each dataset, we trained AutoGluon for up to 4 hours, and specified the same time-limit for H2O-AutoML and AutoSklearn. When running H2O and AutoSklearn on the 30 datasets, each AutoML tool failed to produce predictions on 2 datasets, and we simply recorded the accuracy/latency achieved by the other tool in this case (such failures are common in AutoML benchmarking, c.f.\ \citesi{gijsbers2019open,erickson2020autogluon}). Each AutoML tool was run with all default arguments, except for AutoGluon: we additionally set the argument  \texttt{auto\_stack = True} which instructs the system to maximize accuracy at all costs via extensive stack ensembling. We used the same type of AWS EC2 instance (m5.2xlarge) for each predictor to ensure fair comparison of inference times (each tool was run on separate EC2 instance with no other running processes).

For evaluating our Gibbs samples, we computed the Maximum Mean Discrepancy with the mixture-kernel \citesi{li2017mmd}, with bandwidths = $[1, 2, 4, 8, 16]$.
Our procedure to measure sample fidelity involved the following steps: 
First we trained our \transformer{} to maximize pseudolikelihood over data in the training fold.  
Next we applied Gibbs sampling to generate synthetic samples (initializing the Markov chains at the training data as previously described). 
Subsequently we assembled a balanced dataset of real (held-out) data from our validation fold which received label $y=1$ and fake data comprised of Gibbs samples which received label $y=0$. A random forest was trained on this dataset, and then its accuracy evaluated on another balanced dataset comprised of real data from our test fold (again with label $y=1$) and fake data comprised of a different set of Gibbs samples (labeled with $y=0$).
The resulting `sample fidelity' was defined as the distance between this RF accuracy and 0.5.

\clearpage
\section{Additional Results}
\FloatBarrier

\begin{figure}[h!] \centering
\begin{center}
\textbf{(A) Regression} \\[1em]
\includegraphics[width=0.5\textwidth]{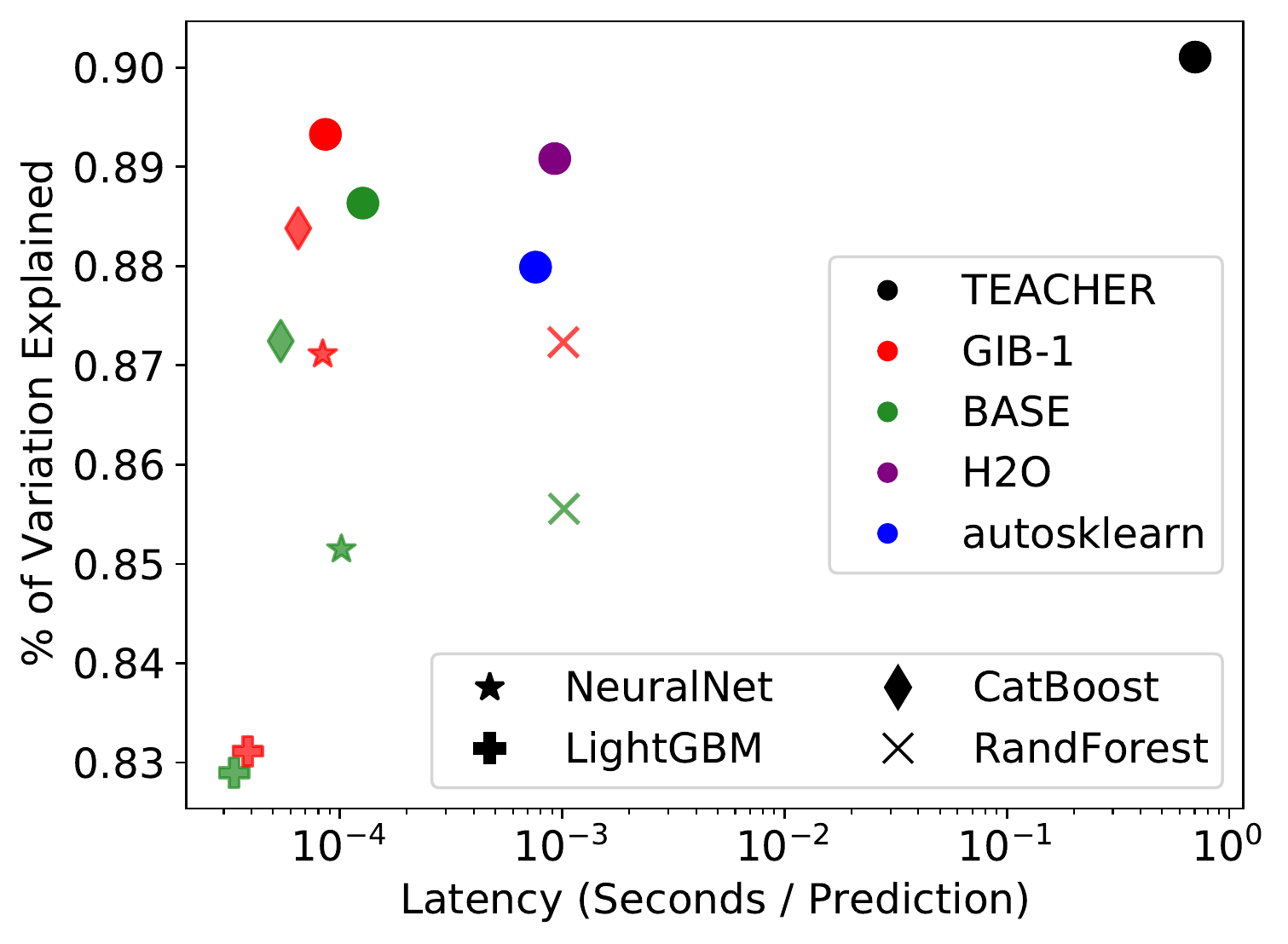} \\[1em]
\end{center}
\begin{tabular}{cc}
\textbf{(B) Binary Classification} & \textbf{\hspace*{3mm} (C) Multiclass Classification}
\\
\includegraphics[width=0.48\textwidth]{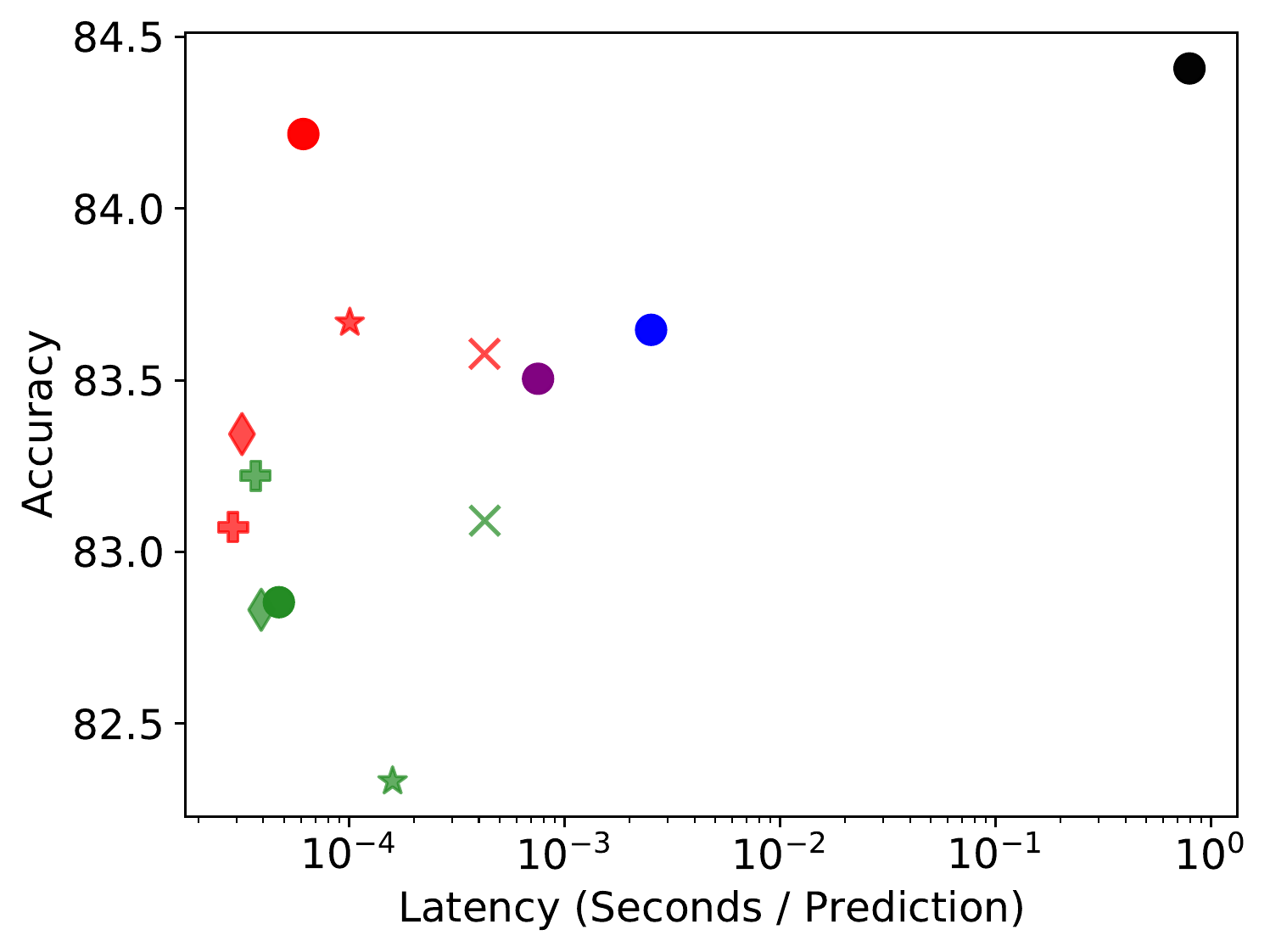}
&
\includegraphics[width=0.47\textwidth]{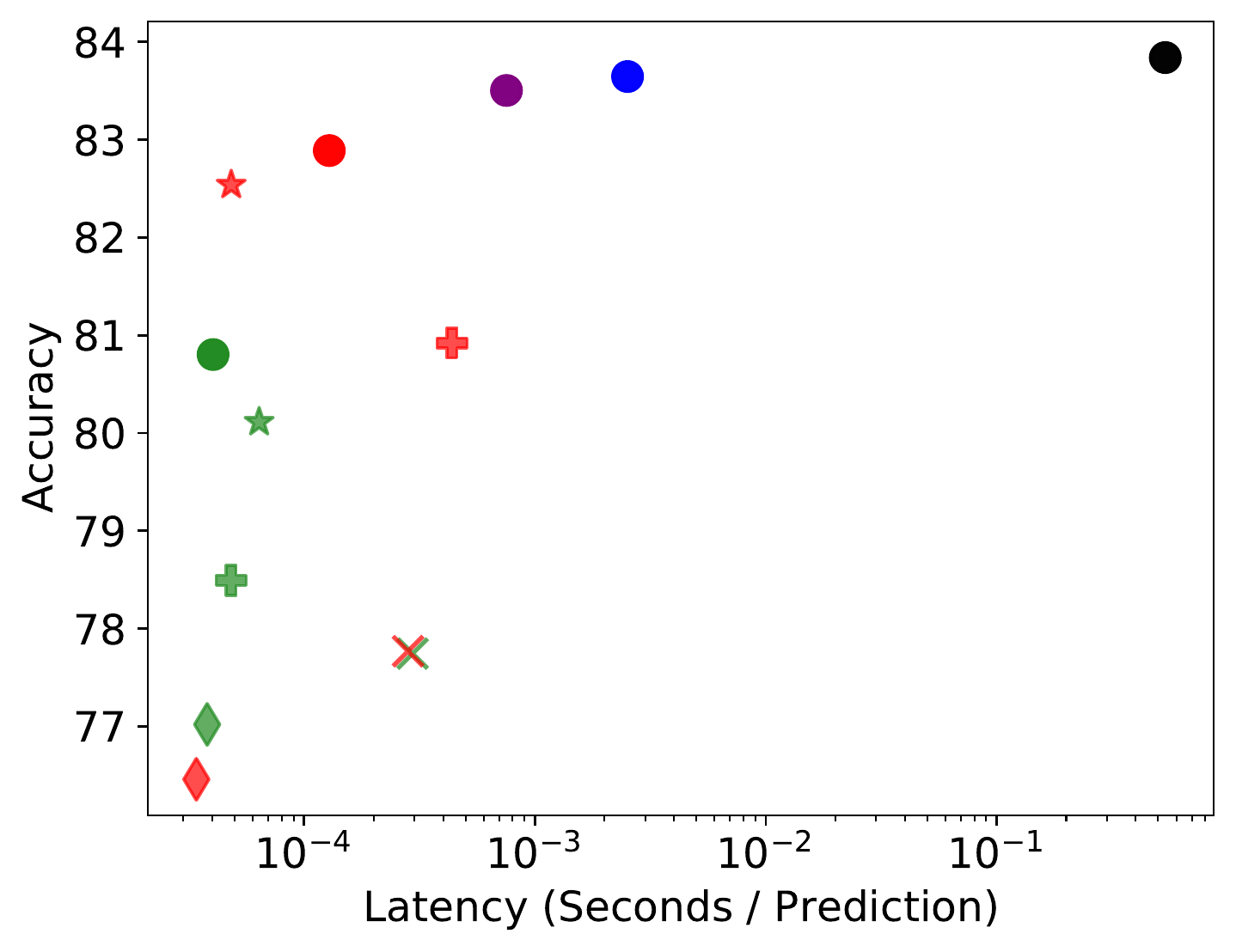}
\end{tabular}
\caption{Test accuracy vs latency of individual models and AutoML ensembles, averaged over the: \textbf{(A)} regression datasets, \textbf{(B)} binary classification datasets, \textbf{(C)} multiclass classification datasets. The GIB-1 and BASE dots show performance of \emph{Selected} model (out of the 4 types) on each dataset. Note that for binary classification: the \emph{Selected} BASE models are actually worse than individual RF/LightGBM models, presumably due to overfitting of the validation set via the early-stopping criterion in NN/CatBoost. This issue appears to be mitigated by distillation with augmented data. In multiclass classification, the distilled LightGBM models exhibit worse latency than their BASE counterparts because distillation uses additional data and soft (probabilistic) labels as targets, such that the underlying function to learn becomes more complex. Thus, the depth of its trees grows since LightGBM does not limit it by default. The BASE/distilled latency could easily be matched by restrictively setting the LightGBM depth/leaf-size hyperparameters to ensure equal-sized trees in these two variants.
}
\label{fig:automlregressmulti}
\end{figure}

\begin{table*}[h!]
\centering
\caption{Raw test accuracy (or percent of variation explained $= R^2 \cdot 100$ for regression) under various training/distillation strategies of the \emph{Selected} best individual model (across all 4 model types) chosen based on validation performance. The final column shows the performance of the ensemble-predictor produced by AutoGluon (used as teacher in distillation). Datasets are colored by task: regression (black), binary classification (blue), multiclass classification (red).
}
\label{tab:selaccs}
\vspace*{1em}
\addtolength{\tabcolsep}{-1.1pt}
\begin{footnotesize}
\begin{tabular}{lccccccccc}
\toprule
                 \textbf{Dataset}  &  \textbf{BASE} &   \textbf{KNOW} &  \textbf{MUNGE} &  \textbf{HUNGE} &    \textbf{GAN} &  \textbf{GIB-1} &  \textbf{GIB-5} & \textbf{GIB-10} & \hspace*{-2mm} \textbf{TEACHER} \hspace*{-2mm} \\
\midrule
                         boston &           91.84 &           90.11 &           90.25 &           91.54 &           92.02 &           92.38 &  \textbf{93.21} &           92.62 &            92.09 \\
                       concrete &           92.20 &           92.66 &           92.07 &           92.31 &           92.33 &           92.39 &  \textbf{92.83} &           92.56 &            92.82 \\
                         energy &           99.86 &           99.85 &           99.91 &           99.92 &           99.87 &  \textbf{99.93} &           99.92 &           99.92 &            99.93 \\
                         kin8nm &           93.36 &           93.58 &           94.10 &           93.82 &           94.08 &           93.96 &  \textbf{94.14} &           94.10 &            93.99 \\
                          naval &           99.74 &           99.75 &  \textbf{99.81} &           99.78 &           99.49 &           99.70 &           99.68 &           99.71 &            99.97 \\
                          power &           96.62 &  \textbf{96.97} &           96.60 &           96.86 &           96.07 &           96.61 &           96.62 &           96.51 &            97.15 \\
                        protein &           68.34 &           67.37 &           69.95 &           68.14 &           67.64 &  \textbf{69.96} &           69.07 &           68.01 &            74.33 \\
                           wine &           56.44 &           56.53 &           57.38 &           58.61 &           56.42 &  \textbf{59.27} &           57.80 &           58.49 &            60.74 \\
                          yacht &           99.24 &           99.55 &           99.88 &           99.92 &           99.93 &  \textbf{99.94} &           99.94 &           99.90 &            99.87 \\
       \textcolor{blue}{amazon} &           94.84 &           94.81 &           94.72 &           94.87 &           94.69 &  \textbf{94.90} &           94.81 &           94.81 &            94.96 \\
   \textcolor{blue}{australian} &           86.95 &  \textbf{88.40} &           85.50 &           85.50 &           85.50 &  \textbf{88.40} &           86.95 &           85.50 &            86.95 \\
    \textcolor{blue}{miniboone} &           94.50 &           94.71 &           94.77 &           94.40 &           94.44 &  \textbf{94.86} &           94.44 &           94.64 &            94.88 \\
        \textcolor{blue}{adult} &           87.06 &           87.26 &           87.36 &  \textbf{87.49} &           86.81 &           86.36 &           86.67 &           86.73 &            87.59 \\
        \textcolor{blue}{blood} &           73.33 &           77.33 &           77.33 &           77.33 &            76.0 &            76.0 &  \textbf{78.66} &           77.33 &             76.0 \\
     \textcolor{blue}{credit-g} &            71.0 &            78.0 &            78.0 &            76.0 &            75.0 &   \textbf{80.0} &   \textbf{80.0} &            77.0 &             79.0 \\
        \textcolor{blue}{higgs} &           72.14 &           73.14 &           73.53 &           72.83 &           72.48 &  \textbf{73.89} &           73.36 &           73.18 &            73.83 \\
      \textcolor{blue}{jasmine} &  \textbf{82.94} &           81.93 &           80.26 &           81.93 &           81.93 &           82.27 &           81.27 &           81.93 &            82.60 \\
        \textcolor{blue}{nomao} &  \textbf{97.30} &           96.98 &           96.72 &           97.15 &           96.98 &           96.77 &           96.83 &           96.86 &            98.20 \\
  \textcolor{blue}{numerai28.6} &           51.12 &           50.36 &           51.78 &           50.78 &           51.23 &  \textbf{52.05} &           51.11 &           51.59 &            51.10 \\
      \textcolor{blue}{phoneme} &           89.46 &           90.38 &           90.57 &           90.20 &           90.57 &  \textbf{91.49} &           90.38 &           90.75 &            92.42 \\
      \textcolor{blue}{sylvine} &           93.56 &           93.37 &           94.15 &           94.34 &           92.39 &           93.56 &           93.95 &  \textbf{94.54} &            95.32 \\
     \textcolor{red}{covertype} &           95.90 &           96.99 &  \textbf{97.00} &           92.84 &           96.39 &           96.19 &           96.48 &           96.06 &            97.66 \\
        \textcolor{red}{helena} &           38.29 &  \textbf{40.70} &           40.26 &           39.44 &           39.43 &           40.50 &           39.84 &           40.50 &            40.75 \\
        \textcolor{red}{jannis} &           70.69 &           72.23 &           72.13 &           70.69 &           71.75 &  \textbf{72.43} &           72.28 &           71.91 &            73.07 \\
       \textcolor{red}{volkert} &           69.62 &           71.34 &  \textbf{72.18} &           69.28 &           70.60 &           71.42 &           70.70 &           70.45 &            74.46 \\
     \textcolor{red}{connect-4} &           84.87 &           86.10 &           86.27 &           84.35 &           85.90 &           86.19 &  \textbf{86.44} &           86.38 &            86.04 \\
  \textcolor{red}{jungle-chess} &           87.59 &           91.78 &           92.92 &           89.53 &  \textbf{96.07} &           93.37 &           94.42 &           93.81 &            99.55 \\
 \textcolor{red}{mfeat-factors} &            98.0 &            97.0 &            97.5 &            97.5 &            98.0 &   \textbf{98.5} &   \textbf{98.5} &   \textbf{98.5} &             98.0 \\
       \textcolor{red}{segment} &           98.70 &           98.70 &           98.70 &           98.70 &           98.70 &  \textbf{99.13} &  \textbf{99.13} &  \textbf{99.13} &   99.13 \\
       \textcolor{red}{vehicle} &           83.52 &           77.64 &  \textbf{88.23} &           87.05 &           83.52 &  \textbf{88.23} &           87.05 &           87.05 &            85.88 \\
\bottomrule
\end{tabular}

\end{footnotesize}
\addtolength{\tabcolsep}{1.1pt}
\vskip -0.1in
\end{table*}

\begin{figure}[h!] \centering
\begin{tabular}{cc}
\textbf{ \hspace*{5mm} (A) Augmentation w Real Data} & \textbf{\hspace*{5mm} (B) Augmentation w GIB-1}
\\
\includegraphics[width=0.49\textwidth]{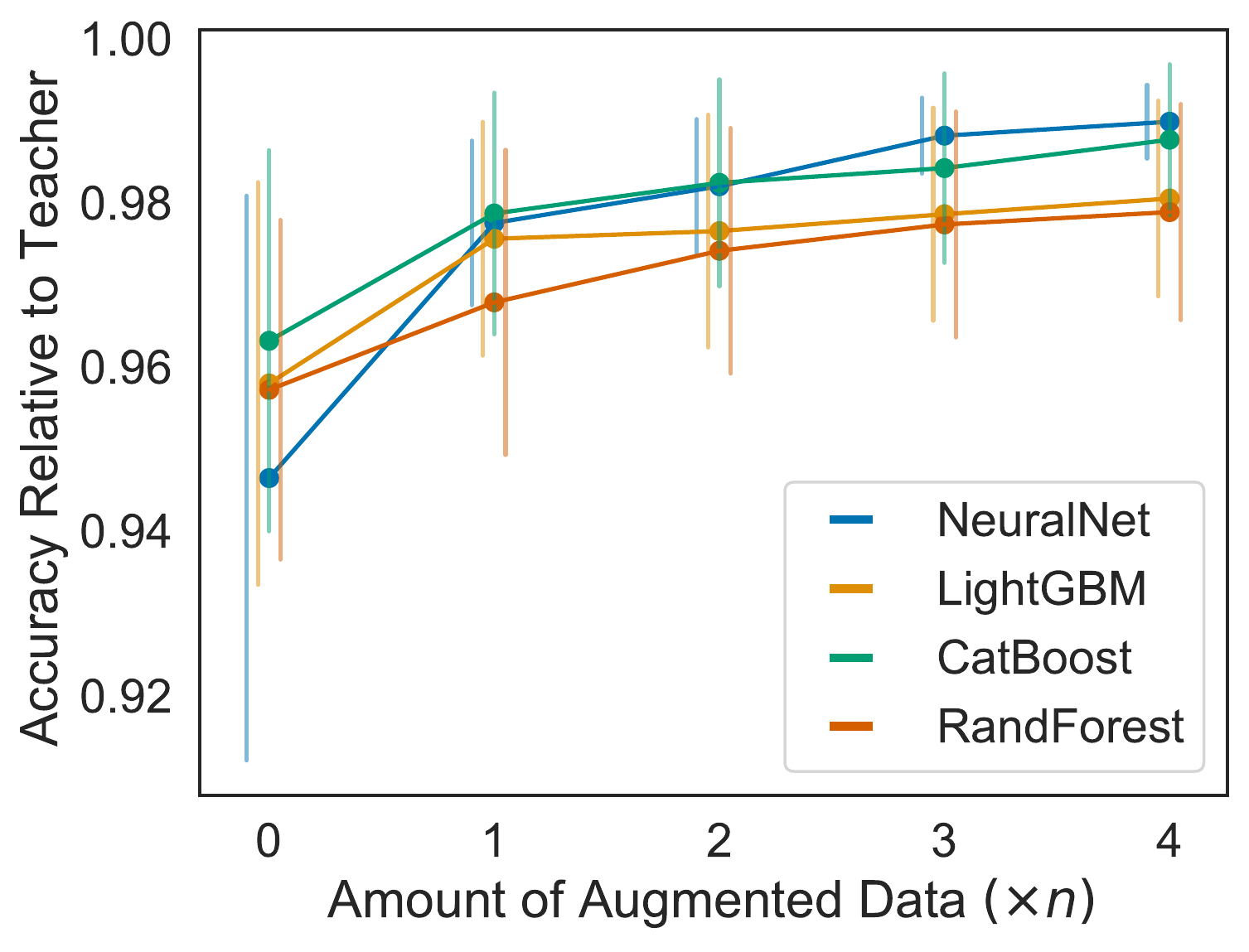}
&
\includegraphics[width=0.49\textwidth]{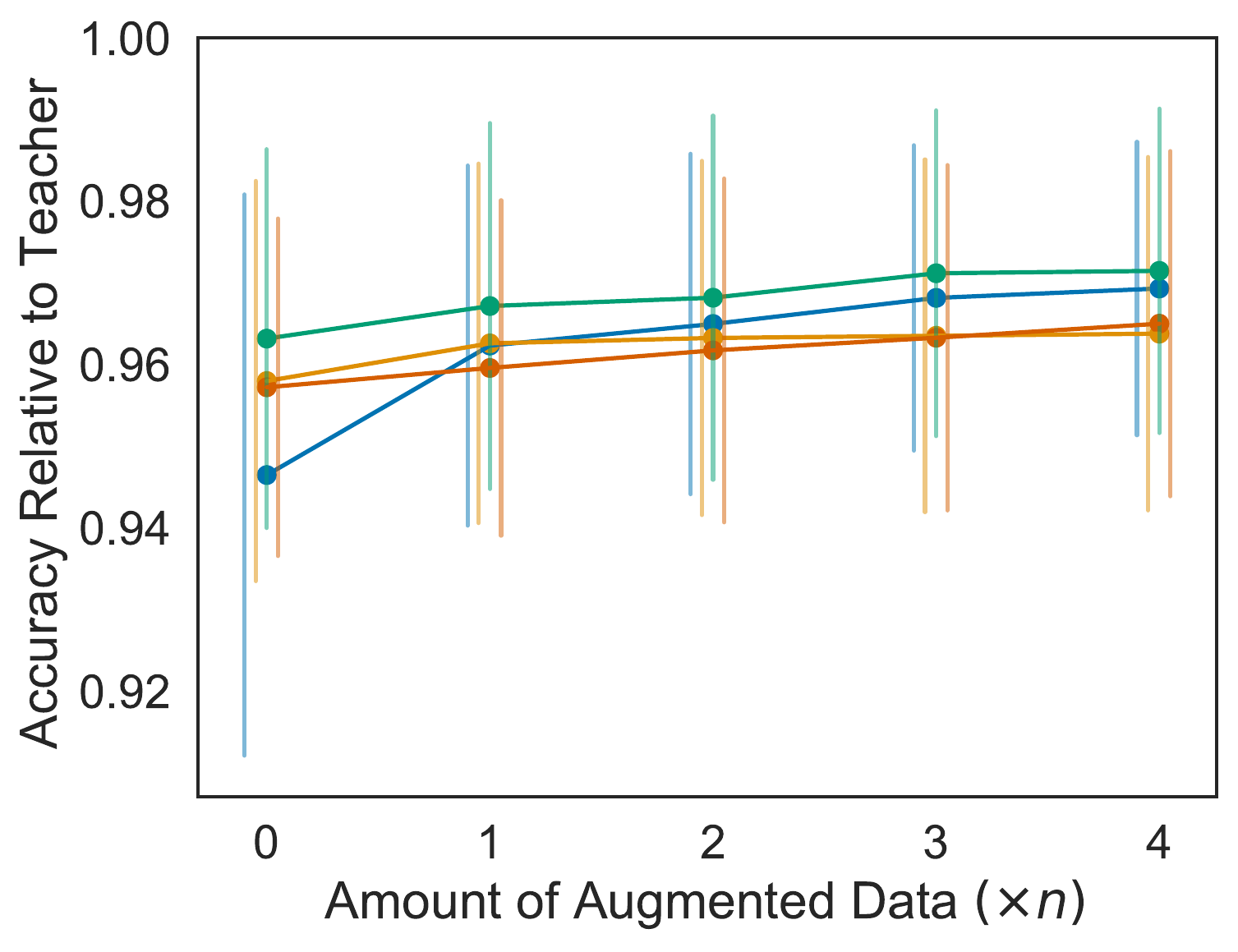}
\end{tabular}

\caption{Distillation performance when augmented data are:  \textbf{(A)} additional real data points from the true underlying  distribution, \textbf{(B)} synthetic examples obtained from 1 round of our Gibbs sampling procedure. Here we report average normalized test accuracy ($R^2$) over the 3 largest regression datasets, with corresponding standard errors indicated by vertical lines  (our normalization rescales $R^2$ by the teacher's $R^2$ on each dataset). To obtain additional real data points for augmentation, we did the following: only 20\% of the original training set was adopted as the training data (accuracies obtained from this training data shown at 0 on x-axis). The rest of the  80\% held-out data was treated as unlabeled and used as augmented data for distillation (in increasing multiples of the training sample-size $n$), following the same distillation procedure described in the main text. The GIB-1 results are obtained by applying our FAST-DAD distillation procedure with only the 20\% training data (the 80\% held-out data are entirely ignored, so our self-attention pseudolikelihood model is fit to relatively little data). The AutoGluon teacher is also only fit to the same 20\% of the training data.  
Panel \textbf{(A)}  empirically validates Lemma \ref{lem:surrogate}, showing that distillation becomes much more powerful with additional unlabeled data from the true feature distribution. Distillation gains produced by augmenting with Gibbs samples do not match the performance of augmenting with real data, suggesting superior generative models may further reduce this gap. 
}
\label{fig:unlabeledperf}
\end{figure}

\clearpage
\section{Proof of \cref{thm:refinement}}
\label{s:proof_refinement}

Here we discuss our refinement of Lemma \ref{lem:surrogate} that formally describes how the number of steps of Gibbs sampling affects the distillation of the student. Lemma \ref{lem:surrogate} suggests that if we learn a probability distribution $q$ using the data $X_n$, we might be able to reduce the variance term in the VC-bound at the cost of a bias. We now characterize the situation when the Gibbs sampler with a steady-state distribution $q$ is initialized at samples from $p$, namely the original training dataset $X_n$, and is run for $k$ steps. Intuitively, if $k$ is large, the sampler provides data $X_m'$ that is diverse from $X_n$ which leads to stronger variance reduction. However it is also true that the samples $X_m'$ are not drawn from $p$ and therefore the teacher $f$ suffers a covariate shift on these samples which leads to poor fitting of the student $g$. This suggests  there should be a sweet spot: the number of Gibbs sampling steps $k$ should lead to variance reduction but should not be so large as to cause a large covariate-shift/bias. We capture this phenomenon in the following theorem. For simplicity, we only consider the special case where $m = n$. Our proof can be generalized to $m \neq n$ but the details of the underlying symmetrization argument are more intricate (see comments in the proof). We stick to this special case to elucidate the main point. 
The full theorem statement is repeated here for completeness.

\setcounter{theorem}{1}
\begin{theorem}[Refinement of Lemma 1]
Under the assumptions of Lemma 1, suppose that the student $g^*$ is chosen to minimize $D_{\trm{emp}}(f, g, X_n \cup X_n')$ where $X_n'$ are $n$ samples drawn after running the Gibbs sampler initialized at samples from $X_n$ for $k$-steps. Then there exist constants $V, c$ and $\delta > 0$ such that with probability at least $1-\delta$ we have
\begin{align}
D(f, g^*, p) \le 
    D_{\trm{emp}}(f, g^*, X_n \cup X_n') + \sqrt{\f{4 V(c + \Delta_k) - \log \delta}{n}} + \Delta_k.
\end{align}
The quantity $\Delta_k = \norm{T^k_q p - p}_{\trm{TV}}$ is the total-variation distance between the true data distribution $p$ and the distribution of the sampler's iterates after $k$ steps, denoted by $T^k_q p$. The steady-state distribution of the Gibbs sampler is denoted by $q$.
\end{theorem}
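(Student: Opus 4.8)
The plan is to recast augmented distillation as a two-\emph{task} learning problem --- one task whose data are drawn from the true feature law $p$, the other whose data are drawn from the $k$-step Gibbs law --- and to apply a Baxter-style multi-task generalisation bound \cite{baxter2000model}, with the discrepancy between the two tasks controlled via MCMC mixing rates. First I would fix notation: write $p_k := T_q^k p$ for the law of a Gibbs sample after $k$ steps, so that $\norm{p_k - p}_{\trm{TV}} = \Delta_k$ by definition. The key structural observation, and the reason we specialise to $m=n$, is that although $X_n'$ is \emph{not} independent of $X_n$, the pairs $(x_i,x_i')$ --- where $x_i'\in X_n'$ results from an independent $k$-step chain initialised at $x_i\in X_n$ --- \emph{are} i.i.d.\ draws from a joint law $\pi$ on $\mathcal{X}\times\mathcal{X}$ with marginals $p$ and $p_k$. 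Writing $h_g(x,x') := \tfrac12\big(d(f(x),g(x)) + d(f(x'),g(x'))\big)$, the combined empirical objective equals $\tfrac1n\sum_{i=1}^n h_g(x_i,x_i')$, an average of $n$ i.i.d.\ terms in $[0,1]$ (using $d\le 1$) with mean $\bar D(f,g) := \tfrac12 D(f,g,p) + \tfrac12 D(f,g,p_k)$; this is why the sample size in the bound is $n$, not $2n$.

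Next I would apply uniform convergence over the induced class $\{h_g:g\in\mathcal{G}\}$, whose capacity is controlled by that of $\{d(f(\cdot),g(\cdot)):g\in\mathcal{G}\}$ (finite by assumption, supplying the constant $V$). Using a variance-sensitive bound rather than a plain VC bound, together with the self-bounding estimate $\mathrm{Var}_\pi(h_g)\le\bar D(f,g)\le D(f,g,p)+\Delta_k$ (from $h_g\in[0,1]$ and $\lvert D(f,g,p)-D(f,g,p_k)\rvert\le\norm{p-p_k}_1 = 2\Delta_k$), one obtains, with probability at least $1-\delta$ and simultaneously for all $g\in\mathcal{G}$,
\[
\bar D(f,g)\le D_{\trm{emp}}(f,g,X_n\cup X_n') + \sqrt{\frac{4V(c+\Delta_k)-\log\delta}{n}},
\]
which is exactly Baxter's multi-task bound specialised to two environments: the universal constants $4$ and $c$ come from the symmetrisation and base capacity, and $\Delta_k$ multiplies $V$ because the effective variance/complexity is inflated by the mismatch between the two tasks' data distributions.

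It then remains to transfer from the mixed risk $\bar D$ back to $D(f,\cdot,p)$ and to control $\Delta_k$. For the first, $D(f,g,p) = \bar D(f,g) + \tfrac12\big(D(f,g,p) - D(f,g,p_k)\big) \le \bar D(f,g) + \Delta_k$, again using $d\le 1$ and $\norm{p-p_k}_1 = 2\Delta_k$; applying this at $g = g^*$ and chaining the previous display yields the claim. For the second, I invoke MCMC mixing rates \cite{wang2014convergence}: $T_q$ contracts towards its stationary law $q$, so $\Delta_k$ grows from $0$ (at $k=0$) towards the estimator bias $\norm{q-p}_{\trm{TV}}$. This is what makes the bound informative --- few Gibbs steps keep $\Delta_k$ small, many steps saturate it at $\norm{q-p}_{\trm{TV}}$, so an inaccurate conditional model ($q$ far from $p$) should be run for only a few steps.

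The main obstacle is the middle step: pushing the task discrepancy $\Delta_k$ \emph{into} the radical (inflating the effective complexity) rather than leaving only the crude $\sqrt{V/n}+\Delta_k$. This requires a variance-localised empirical-process argument on the paired class $\{h_g\}$, with careful bookkeeping of how the i.i.d.-pair structure interacts with the two marginals $p$ and $p_k$ --- and it is precisely this symmetrisation that becomes delicate when $m\neq n$, since then the two sample sets can no longer be coupled pairwise and one must symmetrise over two sample sizes separately.
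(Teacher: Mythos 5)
Your overall strategy coincides with the paper's: view $p$ and $q^k := T_q^k p$ as two tasks, apply a Baxter-style multi-task uniform-convergence bound \cite{baxter2000model} to the mixed risk $\frac{1}{2}\big(D(f,g,p)+D(f,g,q^k)\big)$, transfer back to $D(f,g^*,p)$ at an additive cost of $\Delta_k$ using $d\le 1$, and interpret $\Delta_k$ via MCMC mixing rates \cite{wang2014convergence}. Within that skeleton you make two choices that differ from the paper. The first is an improvement: the paper handles the dependence between $X_n$ and $X_n'$ by \emph{assuming} that $k$ is large enough for the two samples to be statistically independent, whereas your coupling of $(x_i,x_i')$ into $n$ i.i.d.\ draws from a joint law with marginals $p$ and $q^k$ makes the combined empirical objective an honest i.i.d.\ average of the paired losses $h_g$ without any such assumption; this is cleaner and also explains the effective sample size $n$ rather than $2n$.

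The second difference is where your sketch has a genuine gap. The paper does \emph{not} obtain the factor $4V(c+\Delta_k)$ by variance localisation. It bounds the metric $m(g,g')$ that governs Baxter's covering numbers on $\Gcal\times\Gcal$ by the corresponding single-task metric plus or minus $\Delta_k$, so that an $\epsilon$-net of $\Gcal^2$ behaves like an $(\epsilon-\Delta_k)$-net of $\Gcal$; Haussler's bound then gives $\log N(\epsilon,\Gcal^2)\le 4V\big(\log(c/\epsilon)+\Delta_k/\epsilon\big)$ up to higher-order terms, and this metric-entropy inflation is precisely where $\Delta_k$ enters the radical. Your proposed route via $\mathrm{Var}_\pi(h_g)\le \bar D(f,g)\le D(f,g,p)+\Delta_k$ does not deliver this: since $h_g\in[0,1]$ the variance is already bounded by a constant, so a Bernstein-type bound collapses to the ordinary $O(\sqrt{V/n})$ rate with $\Delta_k$ absorbed into the constant, and the appearance of $\Delta_k$ as a multiplier of $V$ becomes cosmetic rather than a consequence of the argument. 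You correctly identify this middle step as the main obstacle, but the missing ingredient is the covering-number perturbation on the product class, not a sharper variance bound. (To be fair, the paper's own execution of that step is itself heuristic: it requires $\Delta_k\ll\epsilon$ and a first-order expansion of $\log(1+\Delta_k/\epsilon)$, so neither derivation is fully rigorous at this point.)
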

\begin{proof}
Let $q$ be the steady-state distribution of the Gibbs sampler with a linear operator $T_q$ that denotes the one-step transition kernel. Under  general conditions~\citesi{robert2013monte}, the distribution of the iterates of the sampler converges to this steady-state distribution as $k \to \infty$, i.e.,
\[
    \lim_{k \to \infty} T^k_q \nu = q,
\]
from any initial distribution $\nu$. Explicit rates are available for this convergence~\citesi{wang2014convergence}: there exist constants $\lambda \in (0,1)$ and $c(q)$ such that
\begin{align}
    \label{eq:gibbs_convergence}
    \norm{T^k_q p  - q}_{\trm{TV}} \leq c(q) \lambda^k.
\end{align}
where $\norm{\nu - \mu}_{\trm{TV}} := 2 \sup \cbr{\abs{\nu(A) - \mu(A)}: A \in \mathcal{B}(\Xcal)}$ denotes the total-variation norm; the set $\mathcal{B}(\Xcal)$ is the Borel $\sigma$-algebra of the domain $\Xcal$.
These rates are sharp for some parametric models~\citesi{diaconis2010gibbs}. We use the following shorthand to denote, $T^k_q p$, the density obtained after applying the one-step transition kernel $k$ times.
\[
    q^k := T^k_q p.
\]

Suppose that the Gibbs sampler initialized at $p$ runs for $k$ steps and we then sample a dataset $X_n'$ of $n$ samples from the resultant distribution $T^k_q p$:
\[
    X_n' = \cbr{x'_i \sim T_q^k p}_{i=1,\ldots,n}.
\]
The samples in $X_n'$ are correlated with those already in $X_n$. The student is fit to this dataset $X_n \cup X_n'$ where the samples are not independent (we don't have $X_n \ni x \indep x' \in X_n'$) or identically distributed ($x \sim p$ and $x' \sim T^k_q p$). Characterizing generalization performance is difficult for this scenario and requires strong assumptions, c.f.\ \citesi{dagan2019learning}, but we can we make the following helpful simplification.

\begin{assumption}
The number of Gibbs steps $k$ is large enough for the samples in $X_n$ and $X_n'$ to be statistically independent.
\end{assumption}
Note that this does not imply that the samples are identically distributed, they still come from distributions $p$ and $T^k_q p$ respectively. Since $k$ is the product of the number of rounds of Gibbs sampling and the dimensionality of the data ($d$), achieving approximate independence does not necessarily require a large number of Gibbs rounds.
 
We now employ a bound by Jonathan Baxter~\citesi{baxter2000model} that studies the generalization performance of a model $g$ when it sees data from a mixture of two different, possibly correlated, distributions, $p$ and $q^k$. This is a uniform-convergence bound and follows via a two-step symmetrization argument where the second step involves separate permutations of the samples in datasets $X_n$ and $X_n'$. The same technique as that of~\citesi{baxter2000model} also works if we draw more data $X_m'$ from the new distribution than the original dataset $X_n$, i.e., if $m \geq n$. However the details are intricate and we stick to this special $m=n$ case to elucidate the main point.

For all functions $g \in \Gcal$, in particular for $g^* = \argmin_g D_{\trm{emp}}(f, g, X_n \cup X_n')$, the following holds with probability at least $1-\delta$: 
\begin{align}
    \label{eq:baxter}
    D \rbr{f, g, \f{p+q^k}{2}} &\leq D_{\trm{emp}}(f, g, X_n \cup X_n')  + \epsilon \nonumber\\
    \trm{if}\qquad n &\geq \f{c}{\epsilon^2} \log \f{N(\epsilon, \Gcal^2)}{\delta}
\end{align}
where $c$ is a constant. The quantity $N(\epsilon, \Hcal)$ is the $\epsilon$-net covering number of the hypothesis class $\Hcal$ under a given metric $m$ \citesi{baxter2000model}. According to Baxter's result, for our case with two tasks, $p$ and $q^k$, we are interested in computing the covering number for $\Hcal = \Gcal \times \Gcal$ and the metric $m$ between two functions in $g, g' \in \Gcal^2$ as
\[
    m(g, g') = \f{1}{2} \int \abr{d(g(x_1), f(x_1)) + d(g(x_2), f(x_2)) - d(g'(x_1), f(x_1)) - d(g'(x_2), f(x_2))}\ \trm{d}p(x_1)\ \trm{d} q^k(x_2).
\]
with the labels $f(x_i)$ given by the teacher. Our hypothesis class $\Hcal$ on the two tasks is the Cartesian product of the hypothesis class $\Gcal$.
Haussler's theorem~\citesi{haussler1990probably} gives an upper bound on the covering number in terms of the VC-dimension
\begin{align}
    \label{eq:haussler}
    \log N(\epsilon, \Hcal) \leq 2 V_\Hcal \log(c/\epsilon).
\end{align}
where $V_\Hcal$ is the VC-dimension~\citesi{vapnik2015uniform} of $\Hcal$, $c > 1$ is a constant, and $\log N(\epsilon, \Hcal)$ is also called the metric entropy.

Observe that the left-hand side in~\cref{eq:baxter} can be written as
\begin{align}
    \label{eq:lhs}
    D\rbr{f, g, \f{p+q^k}{2}} &= D(f, g, p) + \f{1}{2} \int d(f(x), g(x))\  \trm{d}(q^k-p).
\end{align}
Let us define
\[
    \Delta_k := \norm{T^k_q p - p}_\trm{TV}.
\]
We next analyze the metric $m(g,g')$ where we note that again $q^k = T^k_q p$.
\[
    \begin{aligned}
        m(g,g') &\leq \int \abr{d(g(x), f(x)) - d(g'(x), f(x))}\ \trm{d}p(x)\\
        &+ \f{1}{2} \int \abr{d(g(x), f(x)) - d(g'(x), f(x))}\ \trm{d}\rbr{T^k_q - p}(x)\\
        &\leq \int \abr{d(g(x), f(x)) - d(g'(x), f(x))}\ \trm{d}p(x) + \norm{T^k_q p - p}_\trm{TV}.
     \end{aligned}
\]
Similarly we also have
\[
    \int \abr{d(g(x), f(x)) - d(g'(x), f(x))}\ \trm{d}p(x) - \norm{T^k_q p - p}_\trm{TV} \leq m(g,g').
\]
In other words, the distance $m(g,g')$ on the Cartesian space $\Gcal^2$ can be upper bounded by the distance between $g,g'$ on the original space $\Gcal$ up to an additive term $\Delta_k$ that increases with the number of steps $k$ of Gibbs sampling.

Next observe that we have an upper bound on the metric entropy
\begin{equation}
    \label{eq:metric_ent_cartesian}
    \log N(\epsilon, \Gcal^2) \leq 2 \log N(\epsilon, \Gcal)
\end{equation}
if the two datasets $X_n$ and $X_n'$ are iid. If the datasets are not iid, using the calculation for $m(g,g')$ above, computing the size of the $\epsilon$-net for $\Gcal^2$ is effectively the same as changing $\epsilon$ on the right hand side of~\cref{eq:metric_ent_cartesian} to
\[
    \begin{aligned}
        \epsilon' &= \epsilon - \Delta_k\\
        &= \epsilon \rbr{1 - \f{\Delta_k}{\epsilon}}.
    \end{aligned}
\]
Plugging the previous two expressions into~\cref{eq:haussler} implies
\begin{align}
    \log N(\epsilon, \Gcal^2) &\leq 2 \log N(\epsilon, \Gcal)
    \nonumber\\
    &\approx 4 V_\Gcal \log \rbr{\f{c}{\epsilon} \rbr{1 + \f{\Delta_k}{\epsilon}}} \nonumber\\
    &= 4 V_\Gcal \rbr{\log(c/\epsilon) + \f{\Delta_k}{\epsilon}} + o((\Delta_k/\epsilon)^2).
\end{align}
The approximation above is valid if we additionally assume $\Delta_k \ll \epsilon$. We have thus shown that there exists a constant $V$ such that with probability at least $1-\delta$:
\begin{align}
    D(f, g^*, p) &\leq D_{\trm{emp}}(f, g^*, X_n \cup X_n') + \sqrt{\f{4 V(c + \Delta_k) - \log \delta}{n}} + \f{1}{2} \int d(f(x), g^*(x))\  \trm{d}(p-q^k) \nonumber\\
    &\leq D_{\trm{emp}}(f, g^*, X_n \cup X_n') + \sqrt{\f{4 V(c + \Delta_k) - \log \delta}{n}} + \Delta_k
    \label{eq:}
\end{align}
The inequality follows because
\[
    \abr{\int d(f(x), g^*(x))\  \trm{d}(p-q^k)} \leq \abr{\int \trm{d}(p-q^k)} \leq 2 \norm{p-q^k}_{\trm{TV}}
\]
since $d(\cdot, \cdot) \leq 1$. Recall $k$ is the number of Gibbs sampling steps, $c$ is a constant, and  $\displaystyle g^* = \argmin_g\  D_{\trm{emp}}(f, g, X_n \cup X_n')$.
\end{proof}

We provide some additional comments on this result. Note that $\Delta_k \to \norm{p-q}_{\trm{TV}}$ as $k \to \infty$, so it increases with the number of Gibbs sampling steps $k$. We can draw a large number of samples $n$ from $q^k$ to reduce the second term in the bound. 
Using a large $k$ is both computationally inefficient and 
may also cause a bias given by the additive term of $\Delta_k$ (third term), if the stationary distribution $q$ of our Gibbs sampler poorly approximates $p$. 
As our pseudolikelihood model is fit to limited data in practice, it is thus better to draw a large number of samples from earlier steps, i.e.\  using only a few steps of Gibbs sampling from each training datum instead of running a long chain.
Among all $k$ that produce samples which are approximately independent of the original training data, we would like to use the smallest. 

The experiments in our paper empirically show that, on an average over many datasets, running the Gibbs sampler for 1--5 rounds (one round involves performing a Gibbs step for every conditional in the pseudolikelihood) works better than running it for longer. Note that if we employ fewer steps than even a single round of Gibbs sampling, the augmented data will be highly dependent on the training data as some features will not have been resampled, thus diminishing the \emph{effective sample size} of the student's distillation dataset. 
It is also readily seen from the above bound that if the Gibbs sampler is initialized at a distribution other than $p$, we would need a large number of steps $k$ before the bias term $\norm{T^k_q \nu - p}_{\trm{TV}}$ is adequately small.

\clearpage
\bibliographystylesi{abbrvnat}
\bibliographysi{distill}

\end{document}